\documentclass[twoside]{article}
\usepackage{amssymb}
\usepackage{amsmath}
\usepackage{amsthm}
\usepackage{algorithm}
\usepackage{algpseudocode}
\usepackage{caption}
\usepackage{subcaption}
\usepackage{enumerate}% http://ctan.org/pkg/enumerate
\usepackage{pgfplots}
\usepackage{multicol}
\usepackage[hmarginratio=1:1,top=32mm,columnsep=20pt]{geometry}
\usepackage{fullpage}
\usepackage{pdflscape}
\usepackage{setspace}
\usepackage{url}
\usepackage{tikz}
\usetikzlibrary{tikzmark}
\usepackage{color}
\usepackage{pgfplots}
\usepackage{bm}
\usepackage{tikz-cd}
\pgfplotsset{compat=newest}
\usepgfplotslibrary{fillbetween}
\usepackage[toc,page]{appendix}

\usetikzlibrary{arrows.meta,
                calc, chains,
                patterns,
                patterns.meta,
                fit,
                quotes,
                positioning,
                shapes.geometric,
                shapes.misc,
                automata
                }
%Flowchart Tikzset
\tikzset{
   recbox/.style = {
         rectangle,
         draw, 
         align = center, 
         text badly centered,
         inner sep = 6 pt,
         font=\footnotesize,
         line width = 0.3mm,
      },
      circlebox/.style = {
         rounded rectangle,
         draw, 
         align = center, 
         text badly centered,
         inner sep = 7 pt,
         font=\large,
         line width = 0.5mm,
      },
      roundbox/.style = {
         rectangle,
         draw, 
         align = center, 
         rounded corners,
         text badly centered,
         inner sep = 6 pt,
         font=\large,
         line width = 0.5mm,
      },
     box1/.style = {
         rectangle,
         draw, 
         align = center, 
         text badly centered,
         inner sep = 6 pt,
         font=\large,
         line width = 0.5mm,
         minimum width = 30mm,
         minimum height = 7mm,
      },
    papLine/.style = {
         draw,
         -stealth,
         font=\ttfamily,
         line width = 0.5mm,
      },
      }
%end flowcahrt tikz set   

%Colors 
\definecolor{darkgreen}{rgb}{0.01, 0.75, 0.24}
\definecolor{shaded_green}{RGB}{151, 194, 157}

\theoremstyle{plain}% default
\newtheorem{theorem}{Theorem}[section]
\newtheorem{lemma}[theorem]{Lemma}
\newtheorem{proposition}[theorem]{Proposition}

\theoremstyle{definition}
\newtheorem{definition}{Definition}[section]
\newtheorem{example}{Example}[section]

\theoremstyle{remark}

\begin{document}
\parindent=0in
\parskip=12pt

%\SetWatermarkText{Draft}
%\SetWatermarkScale{5}

\title{
  Relative Probability on Finite Outcome Spaces \\
  \large{
    A Systematic Examination of its Axiomatization, Properties, and Applications
  }
}

\author{Max Sklar\\ Local Maximum Labs\\ December 30, 2022}
\date{}

\maketitle
\thispagestyle{empty}

\begin{abstract}
This work proposes a view of probability as a relative measure rather than an absolute one. To demonstrate this concept, we focus on finite outcome spaces and develop three fundamental axioms that establish requirements for relative probability functions. We then provide a library of examples of these functions and a system for composing them. Additionally, we discuss a relative version of Bayesian inference and its digital implementation. Finally, we prove the topological closure of the relative probability space, highlighting its ability to preserve information under limits.
\end{abstract}

\tableofcontents
\newpage

\section{Introduction}

The foundations of probability theory are still very much open to explore!

Since Kolmogorov published the standard axioms for probability\cite{kolmogorov} in 1933, there have been calls to alter them. In ``Kolmogorov's Axiomatisation and Its Discontents'', Lyon\cite{lyon} examines arguments made in favor and against these alterations. One area of ``discontentment'' concerns conditional probability. We often want to identify the probability of event A given event B, or \(P(A|B)\), even when \(B\) has probability zero\footnote{Note that zero probability events can indeed occur, particularly when given a continuous distribution. November\cite{november} gave a more recent philosophical treatment of this phenomenon.}.

We are out of luck with the Kolmogorov model, which defines \(P(A|B)\) as the ratio \(\frac{P(A \cap B)}{P(B)}\). When \(P(B) = 0\), the indeterminate form \(\frac{0}{0}\) appears leaving the conditional probability undefined. In fact, this happens whenever one wishes to compare two events that have probability zero.

Undeterred by the petty obstacle of the indeterminate form, mathematicians and engineers refer to relative probabilities of this type all the time. For example, if we consider a probability distribution over \([0, 1]\) given by \(f(x) = 2x\), we know that the value of \(f(\frac{1}{2}) = 1\) is twice as much as \(f(\frac{1}{4}) = \frac{1}{2}\). In a sense, we believe that the outcome \(\frac{1}{2}\)  is twice as likely as outcome \(\frac{1}{4}\) even though we are only talking about \textit{probability density}.

Hajek\cite{hajek} (citing Borel) gives a much more compelling example: if a random point on the Earth is selected, what is the probability that it is in the eastern hemisphere given that it is on the equator? It seems that one should not hesitate to answer one half. And yet the equator, being a mere 1-dimensional object, has probability zero compared to the rest of the globe.

To address the discrepancy between theory and practice, we propose a non-standard model of probability. This model is built on the relationships between outcomes and events rather than their absolute likelihoods. By focusing on these relationships, this improves on the Kolmogorov model by solving the conditional probability question. As additional benefits, it fits nicely with most distribution sampling techniques and establishes probability functions as categories.

\subsection{Previous Work and Goals}

Leading probability theorists in the twentieth century developed axiomatic systems for conditional probability, notably Renyi\cite{renyi}. Later, Kohlberg and Reny\cite{kohlberg} introduced the idea of relative probability and applied it to game theory. Heinemann\cite{heinemann} further developed the idea into \textit{relative probability measures} along with their axioms and definitions, and found applications in Bayesian inference and economics\cite{heinemann_econ}.

This work is focused on the systematization of these concepts and will expand on them in several ways.

First, we will construct a theory of relative probability on finite outcome spaces. By omitting infinite outcome spaces, we temporarily set aside the need to account for measurable sets and countable additivity\footnote{In the textbook Invitation to Discrete Mathematics\cite{discrete}, Matoušek et al. make the same simplification in order to illustrate concepts while at the same time writing \quote{By restricting ourselves to finite probability spaces we have simplified the situation considerably... A true probability theorist would probably say that we have excluded everything interesting.}}. Even with this vast simplification there is much to be learned. The fundamental definitions for relative probability will be more clearly constructed without the distractions that are introduced by continuous space.

Our focus on probability at the outcome level will lead us to separate out three fundamental axioms of relative probability. These fundamental axioms, acting on outcomes only, will be distinct from those related to summation and measurability. They will establish relative probability as a \textit{thin category} on outcomes.

Second, we will categorize the various patterns and states that arise when dealing with relative probability, including total comparability, possibility classes, and anchor outcomes. 

Third, we focus on the computational properties of Bayesian inference. The relative probability function will simplify the formulas for some distributions in the Bayesian framework. We introduce the indeterminate wildcard value, which will inevitably arise when relative probability is inferred on messy, real world data. This discussion ends with how these concepts can be applied to code and data structures.

Finally, we prove the ability of relative probability functions to retain information when taking limits, or in other words their topological closure.

Ultimately, practitioners will find these features of relative probability attractive. Its sphere of application could be expanded beyond game theory and theoretical probability into fields like machine learning. This paper provides a foundational analysis that future researchers can use to expand that sphere.

\section{Preliminaries}
\subsection{Magnitude Space}

\begin{definition}
The \textit{magnitude space} \(\mathbb{M}\) is the set of all positive real numbers along with \(0\) and \(\infty\).
\[\mathbb{M} = [0, +\infty]\]
\end{definition}

Magnitudes correspond with our intuition of size. The value of infinity is a \textit{limit element}, larger that all of the other magnitudes. It endows the magnitude space with several important properties:

\begin{enumerate}
\item Compactness: Sequences that go off to infinity still have a limit (at \(\infty\)).
\item Symmetry around ratios: When we compare the probability of two events, we get their \textit{odds}. If the odds are 0, then we are comparing an event with probability 0 to an event with probability \(>0\). We should be able to reverse this comparison, and say there are infinite odds when an event with probability \(>0\) is compared to an event with probability 0. It is also common to define the odds of a single event as the odds of that event against its converse. In this case, \(\infty\) corresponds to events that are certain.
\item The infinite element is introduced in measure theory because many mathematical systems (real and natural numbers for example) contain sets of infinite measure.
\end{enumerate}

We set \(0^{-1} = \infty\) and \(\infty^{-1} = 0\), even though the product \(0 \cdot \infty\) is indeterminate.

\subsection{The Wildcard Element}

\begin{definition}
Let the \textit{magnitude-wildcard space} \(\mathbb{M}^*= \mathbb{M} \cup \{\ast\}\) be the set of magnitudes along with a \textit{wildcard element}, \(\ast\).
\end{definition}

The wildcard element corresponds to several different concepts, each appearing in a unique discipline:
\begin{itemize}
  \item The \textit{NaN}, or \textit{Not a Number}\footnote{``Not a Number'' may have been an unfortunate naming choice because it actually represents \textbf{any} number!} value in the IEEE standard for floating point arithmetic\cite{ieee}.
  \item The indeterminate form \(\frac{0}{0}\) in arithmetic.
  \item The \textit{wildcard pattern} used in pattern matching and regular expressions in type theory and computer science
\end{itemize}

The following properties on \(\ast\) allow multiplication of any two magnitude-wildcard values.
\[0 \cdot \infty = \ast \qquad \ast \cdot m = \ast\]

We may also define \(\ast + m = \ast\), but there is an argument to  be made that because there are no negative numbers in this system, \(\ast + m\) should be a new pattern that only matches magnitudes greater than or equal to \(m\). For our purposes, the question of how to handle this can be deferred.

\subsection{The Matching Relation}

\begin{definition}
The \textit{matching relation} \(:\cong\) is a binary relation on \(\mathbb{M}^*\). The statement \(m_1 :\cong m_2\) is read ``\(m_1\) is matched by \(m_2\)'' and is true when either \(m_1\) equals \(m_2\) or \(m_2\) is a wildcard.
\[m_1 :\cong m_2 \Longleftrightarrow (m_1 = m_2) \vee (m_2 = \ast)\]
\end{definition}

The left hand side of a matching relation is the \textit{parameter} and the right hand side is the \textit{constraint}. The wildcard element represents every single value, but it cannot be represented by any specific value. It also represents a loss of information about the parameter.

A few lemmas quickly follow.

\begin{lemma}
\label{wild_prop_1} If a magnitude matches a non-wildcard element, then the two values are equal. \[m_1 :\cong m_2 \wedge m_2 \neq \ast \Longrightarrow m_1 = m_2\]
\end{lemma}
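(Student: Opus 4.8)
The plan is to unfold the definition of the matching relation and finish with a one-line elimination of cases, so the proof is essentially immediate. By the definition of \(:\cong\), the hypothesis \(m_1 :\cong m_2\) is logically equivalent to the disjunction \((m_1 = m_2) \vee (m_2 = \ast)\). I would carry the second hypothesis \(m_2 \neq \ast\) alongside this disjunction and argue by disjunctive syllogism.

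First I would write out both assumptions explicitly: from \(m_1 :\cong m_2\) I obtain \((m_1 = m_2) \vee (m_2 = \ast)\), and I separately have \(m_2 \neq \ast\). Next, I would observe that the second disjunct \(m_2 = \ast\) directly contradicts \(m_2 \neq \ast\), so it cannot hold. Eliminating that branch leaves only the first disjunct, \(m_1 = m_2\), which is exactly the desired conclusion.

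Since the whole argument reduces to unpacking a biconditional definition and discarding one branch of an \emph{or}, there is no genuine obstacle to anticipate. The only point requiring care is to invoke the definition of \(:\cong\) as the source of the disjunction, rather than treating the matching relation as a primitive. I expect the finished proof to occupy a single short paragraph.
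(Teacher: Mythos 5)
Your proof is correct and matches the paper's intent exactly: the paper states this lemma without proof, treating it as an immediate consequence of the definition of \(:\cong\), and your unfolding of the disjunction \((m_1 = m_2) \vee (m_2 = \ast)\) followed by disjunctive syllogism against \(m_2 \neq \ast\) is precisely that one-line argument spelled out.
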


\begin{lemma}
\label{wild_prop_2}
Every element is matched by the wildcard element. \(m :\cong \ast\)
\end{lemma}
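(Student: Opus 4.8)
The plan is to unfold the definition of the matching relation and observe that the claim falls out immediately from the structure of the defining disjunction. I would fix an arbitrary element \(m \in \mathbb{M}^*\) and instantiate the definition of \(:\cong\) with parameter \(m_1 = m\) and constraint \(m_2 = \ast\). This rewrites the goal \(m :\cong \ast\) as the proposition \((m = \ast) \vee (\ast = \ast)\).

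From here the argument is purely propositional. The second disjunct, \(\ast = \ast\), is true by reflexivity of equality, so the entire disjunction holds regardless of the value of \(m\). Since \(m\) was arbitrary, \(m :\cong \ast\) holds for every element of \(\mathbb{M}^*\), which is exactly the claim.

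It is worth noting the structural symmetry with Lemma \ref{wild_prop_1}. There the constraint is assumed to be a non-wildcard, which falsifies the right disjunct and forces the equality through the left one; here the constraint \emph{is} the wildcard, which satisfies the right disjunct outright and renders the left one irrelevant. I do not anticipate any real obstacle: the lemma is a direct consequence of the definition, and it simply formalizes the intended reading that the wildcard, used as a constraint, is matched by every value.
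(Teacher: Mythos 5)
Your proof is correct and matches the paper's intended argument exactly: the paper states this lemma without proof as an immediate consequence of the definition, and your unfolding to \((m = \ast) \vee (\ast = \ast)\) with the second disjunct holding by reflexivity is precisely that one-step reading. No gaps.
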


\begin{lemma}
\label{wild_prop_3}
The wildcard element is matched only by itself. \(\ast :\cong m \Longrightarrow m = \ast\)
\end{lemma}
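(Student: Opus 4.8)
The plan is to reason directly from the definition of the matching relation, since the statement is essentially a case analysis on its defining disjunction. First I would substitute the parameter \(m_1 = \ast\) and the constraint \(m_2 = m\) into the biconditional defining \(:\cong\), obtaining
\[
\ast :\cong m \Longleftrightarrow (\ast = m) \vee (m = \ast).
\]
Under the hypothesis \(\ast :\cong m\), at least one of these two disjuncts must hold.

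The key step is to observe that \emph{both} disjuncts deliver the desired conclusion \(m = \ast\). The first disjunct, \(\ast = m\), is equivalent to \(m = \ast\) by the symmetry of equality; the second disjunct is already the conclusion verbatim. Hence, regardless of which disjunct is responsible for the match, we conclude \(m = \ast\).

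Alternatively, I would derive the result from Lemma~\ref{wild_prop_1} by contraposition: assume \(\ast :\cong m\) but, toward a contradiction, suppose \(m \neq \ast\). Then Lemma~\ref{wild_prop_1}, applied with \(m_1 = \ast\) and \(m_2 = m\), forces \(\ast = m\), contradicting \(m \neq \ast\); therefore \(m = \ast\). This route has the advantage of reusing an already-established fact rather than reopening the definition.

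I do not anticipate any genuine obstacle: the entire content of the lemma is the asymmetry of \(:\cong\), and the proof is simply the unfolding of one definition. It is worth flagging the contrast with Lemma~\ref{wild_prop_2}, which exploits the wildcard sitting on the right as a \emph{constraint}, where it matches everything; here the wildcard sits on the left as a \emph{parameter}, where the definition permits a match only by an identical constraint. Making that asymmetry explicit is the one expository point I would want the proof to land.
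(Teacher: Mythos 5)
Your proof is correct and takes the same route the paper intends: the paper states Lemma~\ref{wild_prop_3} without proof as an immediate consequence of the definition of \(:\cong\), and your unfolding of the defining disjunction (both disjuncts yielding \(m = \ast\)) is precisely that immediate argument. The alternative derivation via Lemma~\ref{wild_prop_1} is also sound, but no different in substance.
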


The matching relation looks a lot like equality and in many cases it is, but it doesn't have all of the same properties.

\begin{theorem}
The matching relation is reflexive, transitive, and anti-symmetric but not symmetric.
\end{theorem}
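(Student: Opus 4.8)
The plan is to treat the four claimed properties one at a time, drawing on the three matching lemmas already established and supplying a single explicit counterexample for the failure of symmetry. Reflexivity is immediate: for any $m \in \mathbb{M}^*$ the left disjunct $m = m$ of the defining condition holds, so $m :\cong m$. The real substance is concentrated in transitivity; anti-symmetry and non-symmetry then reduce to one-line applications of Lemmas \ref{wild_prop_2} and \ref{wild_prop_3}.

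For transitivity I would assume $m_1 :\cong m_2$ and $m_2 :\cong m_3$ and then split on whether the final constraint $m_3$ is a wildcard. If $m_3 = \ast$, the conclusion $m_1 :\cong m_3$ holds at once via the right disjunct of the definition. If instead $m_3 \neq \ast$, then Lemma \ref{wild_prop_1} applied to $m_2 :\cong m_3$ forces $m_2 = m_3$, and in particular $m_2 \neq \ast$; applying Lemma \ref{wild_prop_1} a second time to $m_1 :\cong m_2$ yields $m_1 = m_2 = m_3$, whence $m_1 :\cong m_3$. This case split is the main (and essentially the only) obstacle: one must resist chasing the equality disjuncts blindly, since the genuinely delicate scenario is a wildcard sitting in the \emph{middle} position $m_2$. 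That scenario is ruled out precisely because a wildcard at the end already settles the conclusion, while a non-wildcard $m_3$ propagates honest equality backwards through $m_2$ by Lemma \ref{wild_prop_1}.

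For anti-symmetry I would assume both $m_1 :\cong m_2$ and $m_2 :\cong m_1$ and again split cases. From the first relation either $m_1 = m_2$, in which case we are finished, or $m_2 = \ast$; in the latter case the second relation reads $\ast :\cong m_1$, so Lemma \ref{wild_prop_3} forces $m_1 = \ast = m_2$. Either branch gives $m_1 = m_2$. Finally, to show the relation is not symmetric it suffices to exhibit one pair that matches in a single direction: for any magnitude $m \neq \ast$, Lemma \ref{wild_prop_2} gives $m :\cong \ast$, whereas $\ast :\cong m$ fails by Lemma \ref{wild_prop_3}. This asymmetric pair defeats symmetry and completes the verification of all four assertions.
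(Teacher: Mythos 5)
Your proof is correct and takes essentially the same approach as the paper: reflexivity directly from the definition, a wildcard case analysis built on Lemmas \ref{wild_prop_1}--\ref{wild_prop_3} for transitivity, and the identical counterexample ($m :\cong \ast$ but $\ast :\ncong m$ for non-wildcard $m$) for the failure of symmetry. The only cosmetic differences are that you split transitivity on whether the final constraint $m_3$ is a wildcard, propagating equality backwards via Lemma \ref{wild_prop_1}, where the paper propagates wildcards forward to conclude $m_3 = \ast$, and you handle anti-symmetry by a case split invoking Lemma \ref{wild_prop_3} where the paper uses a boolean factorization --- both routes are equally valid and complete.
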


\begin{proof}
Reflexive is obvious: \(m :\cong m \Longleftrightarrow (m = m) \vee (m = \ast)\)

The transitive property states that for all \(m_1, m_2, m_3\) in \(\mathbb{M}\), if \(m_1 :\cong m_2\) and \(m_2 :\cong m_3\), then \(m_1 :\cong m_3\).

Assume that \(m_1 :\cong m_2\) and \(m_2 :\cong m_3\). If none of these values are the wildcards, then by lemma \ref{wild_prop_1}, they are all equal and \(m_1 :\cong m_3\). If \(m_1 = \ast\) then by lemma \ref{wild_prop_3}, \(m_2 = \ast\) and finally \(m_3 = \ast\). In other words, if any of the three values are \(\ast\), then \(m_3 = \ast\). By lemma \ref{wild_prop_2}, the theorem holds.

The anti-symmetric property states that if \(m_1 :\cong m_2\) and \(m_2 :\cong m_1\), then \(m_1 = m_2\).

If both \(m_1\) and \(m_2\) match each other, then \((m_1 = m_2) \vee (m_2 = \ast)\) and \((m_2 = m_1) \vee (m_1 = \ast)\). By boolean factorization, this means that \((m_1 = m_2) \vee ((m_2 = \ast) \wedge (m_1 = \ast))\). On both sides of the conjunction, \(m_1 = m_2\).

The matching relation is not symmetric because \(m :\cong \ast\) but \(\ast :\ncong m\) for any non-wildcard \(m\).
\end{proof}

\section{Categorical Distributions}

Let \(\Omega\) be a set of mutually exclusive \textit{outcomes}\footnote{Each outcome could be thought of as a possible result of a random trial, or a possible value for an unknown variable}. We assume that \(\Omega\) is finite and there are \(K\) outcomes or \textit{categories} so that \(|\Omega| = K\).

\begin{definition}
\label{def:categorical_abs}
A \textit{categorical distribution} on a \(\Omega\) is a function \(P: \Omega \rightarrow [0, 1]\) such that \(\sum_{h \in \Omega} P(h) = 1\)
\end{definition}

The set of all categorical distributions of size \(K\) can be embedded in \(\mathbb{R}^K\) as a (K-1)-dimensional object called a simplex (see figure \ref{fig:simplex}). For example, if \(K = 3\), the resulting space of categorical distributions is an equilateral triangle embedded in \(\mathbb{R}^3\) connecting the points (1, 0, 0), (0, 1, 0), and (0, 0, 1).

\begin{figure}[h]
\centering
\resizebox{0.3\textwidth}{!}{
\begin{tikzpicture} %Segment P_0,P_1
\draw[-latex,line width=0.5mm] (0,2)--(0,0) -- (3.5,0) node[right]{\(P_{0}\)};
\draw[-latex,line width=0.5mm] (0,0)--(0,3.5) node[above]{\(P_{1}\)};
\draw[red,line width=0.5mm] (2,0)--(0,2);
\draw[fill=black] (2,0) circle (2pt) node[black,below,scale=0.8] {(0,1)};
\draw[fill=black] (0,2) circle (2pt) node[black,left,scale=0.8] {(1,0)};
\end{tikzpicture}
}
\resizebox{0.3\textwidth}{!}{
\begin{tikzpicture} %Triangle P_0,P_1,P_2
\draw[-latex,line width=0.5mm] (0,2,0)--(0,0,0) -- (4,0,0) node[right]{\(P_{0}\)};
\draw[-latex,line width=0.5mm] (0,0,0) -- (0,4,0) node[above]{\(P_{1}\)};
\draw[-latex,line width=0.5mm,dashed] (0,0,0) -- (0,0,-6.5) node[above right]{\(P_{2}\)};
\draw[fill=red!70!,fill opacity=0.5] (2,0)--(1.3,1.3)--(0,2)--cycle;
\draw[fill=black] (2,0) circle (2pt) node[black,below,scale=0.8] {(1,0,0)};
\draw[fill=black] (1.3,1.3) circle (2pt) node[black,below right,scale=0.8] {(0,0,1)};
\draw[fill=black] (0,2) circle (2pt) node[black, left,scale=0.8] {(0,1,0)};
\end{tikzpicture}
}
\resizebox{0.3\textwidth}{!}{
\begin{tikzpicture} %Tetrahedron 
\draw[line width=0.5mm] (0,3)--(-2,0)--(2,0)--cycle;
\draw[line width=0.5mm,dotted] (0,1.3)--(2,0);
\draw[line width=0.5mm,dotted] (0,1.3)--(-2,0);
\draw[line width=0.5mm,dotted] (0,1.3)--(0,3);
\draw[fill=black] (2,0) circle (2pt) node[black,below,scale=0.8] {(0,1,0,0)};
\draw[fill=black] (0,1.3) circle (2pt) node[black,right,rotate=0, scale=0.8] {(0,0,0,1)};
\draw[fill=black] (0,3) circle (2pt) node[black,above,scale=0.8] {(1,0,0,0)};
\draw[fill=black] (-2,0) circle (2pt) node[black,below,scale=0.8] {(0,0,1,0)};
\end{tikzpicture}
}
\caption{An illustration of the probability simplex for K = 2, 3, and 4. These objects are respectively, a segment embedded in \(\mathbb{R}^2\), an equilateral triangle embedded in \(\mathbb{R}^3\), and a normal tetrahedron embedded in \(\mathbb{R}^4\). We make no attempt to visualize the 4D space that contains the tetrahedron.}
\label{fig:simplex}
\end{figure}
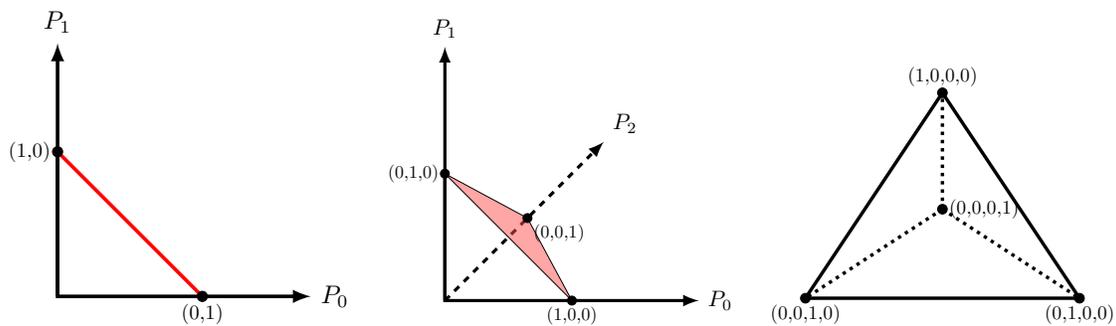

\subsection{Events}

An \textit{event} is a set of outcomes, and by convention \(\mathcal{F}\) is the set of all possible events. In general, \(\mathcal{F}\) is not the entire power set of \(\Omega\), but when \(\Omega\) is finite we can consider any subset \(e \subseteq \Omega\) to be an event\footnote{We need not concern ourselves with defining a \(\sigma\)-algebra of measurable sets.}.

Definition \ref{def:categorical_abs} creates a probability function that assigns values on individual outcomes. We now define the probability function on an event, which corresponds to the probability that any one of its outcomes occur. Looking at probability on the event level rather than the outcome level is a crucial insight in the development of probability theory (and measure theory more generally). Even though the process is far simpler for finite distributions, we must pay attention to this layer in order for the framework to generalize. 

\begin{definition}
For all \(e\) in \(\mathcal{F}\), \[P(e) = \sum_{h \in e}{P(h)}.\]
\end{definition}

\(P\) acts on either outcomes or events using the convention \(P(\{h\}) = P(h)\). \(\Omega\) is itself an event; the \textit{universal event} of all outcomes, with probability 1. \[P(\Omega) = \sum_{h \in \Omega}{P(h)} = 1\]

\subsection{Relative Probability Function}
\label{section:standard_relative_prob}

A \textit{relative probability function}, or \textit{RPF}, measures the probability of one event with respect to another. For example, we may wish to talk about an event that is ``twice as likely'' as another, even if we don't know the absolute probability of either event. In a sense, all probability is relative and conditional because all probabilistic statements come with underlying assumptions.

We continue to use P to represent the RPF but now with two inputs instead of one. The expression \(P(e_1, e_2)\) can be read as the probability of \(e_1\) relative to \(e_2\).

\[P: \mathcal{F} \times \mathcal{F} \rightarrow \mathbb{M}^*\]

We define relative probability in terms of absolute probability as a ratio, in the style of Kolmogorov.

\begin{definition}
\label{def:ratio}
The relative probability of events \(e_1\) and \(e_2\) on a categorical distribution \(P\) is as follows:
\[P(e_1, e_2) = \frac{P(e_1)}{P(e_2)}\]
\end{definition}

If \(P(e_1) = P(e_2) = 0\), then \(P(e_1, e_2) = \ast\), representing the motivational problem of zero-probability events being incomparable.

With absolute probability, information is lost at the boundaries of the simplex where the probability of several outcomes might be assigned a value of zero. For example, if \(\Omega = \{a, b, c\}\) with \(P(a) = 1\) and \(P(b) = P(c) = 0\), we cannot compare the probabilities of \(b\) and \(c\) by ratio as we can in the rest of the simplex.

This poses an interesting problem for limits.

\begin{example}
\label{ex:abs_lose_info}
Consider the following categorical distribution function, with parameter \(\epsilon > 0\):
\[
P(a) = 1 - \epsilon\qquad
P(b) = \frac{2}{3}\epsilon\qquad
P(c) = \frac{1}{3}\epsilon
\]
\end{example}

This is clearly an absolute probability, and its limit as \(\epsilon\) goes to zero should be \(P(a) = 1, P(b) = P(c) = 0\). The fact that b is twice as likely as c is lost!

\section{The Relative Probability Approach}
\label{section:new_relative_prob}

In section \ref{section:standard_relative_prob}, the relative probability function was derived from the absolute probability function. Here in section \ref{section:new_relative_prob}, we invert this process by starting with the relative probability as the fundamental building block.

\subsection{Fundamental Axioms}

\begin{definition}
\label{def:fundamental_laws}
Let \(\Omega\) be the set of outcomes, and \(P: \Omega \times \Omega \rightarrow \mathbb{M}^*\) be a function acting on two outcomes to produce a magnitude-wildcard. \(P\) is a \textit{relative probability function on the outcomes of \(\Omega\)} if it obeys the \textit{3 fundamental axioms of relative probability}:

\begin{enumerate}[(i)]
\item The \textit{identity axiom}: \(P(h, h) = 1\)
\item The \textit{inverse axiom}: \(P(h_1, h_2) = P(h_2, h_1)^{-1}\)
\item The \textit{composition axiom}: \(P(h_1, h_3) :\cong P(h_1, h_2) \cdot P(h_2, h_3)\)
\end{enumerate}

\end{definition}

\(P(h_1, h_2)\) represents the probability of \(h_1\) relative to \(h_2\). Outcomes \(h_1\) and \(h_2\) are \textit{comparable} if \(P(h_1, h_2) \neq \ast\).

Let us pause for a moment to discuss how these axioms were selected. The composition axiom is doing most of the work, and it succinctly encodes how relative probability works. If \(A\) is twice as likely as \(B\), and \(B\) is 3 times as likely as \(C\), then \(A\) had better be 6 times as likely as \(C\). If not, these relative probability assignments would have no meaning; they would just be numerical assignments without rhyme or reason\footnote{Many of our political and economic forecasts come in this form.}.

The composition axiom is enough to show that the identity axiom works most of the time. For any two outcomes \(h_1\) and \(h_2\) we get through composition \(P(h_1, h_2) :\cong P(h_1, h_1) \cdot P(h_1, h_2)\). As long as \(P(h_1, h_2)\) isn't \(0\), \(\infty\), or \(\ast\), then we would have to conclude \(P(h_1, h_1) = 1\) so long as \(h_1\) is comparable to itself.

But that doesn't get us all the way there! There are still scenarios where \(P(h, h) = \ast\). The question that must be asked is: should the the self-comparisons in an outcome space contain information where there is a choice of values between \(1\) and \(\ast\)? Such information would not be relevant to relative probability. Therefore, \(P(h, h)\) can only have a single value and it must be 1. Hence, the necessity of the identity axiom.

Composition and identity can actually be combined into a single axiom about composition paths. It's a bit unwieldy, but nevertheless interesting.

\begin{proposition}[Path Composition]
Given a non-empty list of \(N\) outcomes \(h_0, h_1, h_2, ..., h_{N-1}\), \[P(h_0, h_{N-1}) :\cong \prod_{k=0}^{N-2} P(h_k, h_{k+1}) \]
\end{proposition}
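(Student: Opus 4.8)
The plan is to proceed by induction on the number of outcomes $N$, with the composition axiom supplying the inductive step and the identity axiom handling the base case. The one piece of machinery not yet in hand is a compatibility lemma between the matching relation and multiplication, which I would isolate first.

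First I would dispatch the base case $N = 1$. Here the list is a single outcome $h_0$, so the left-hand side is $P(h_0, h_0)$, which equals $1$ by the identity axiom, while the right-hand side is the empty product $\prod_{k=0}^{-1}$, conventionally $1$; reflexivity of $:\cong$ then gives $1 :\cong 1$. (The case $N = 2$ is likewise immediate, reducing to $P(h_0, h_1) :\cong P(h_0, h_1)$ by reflexivity, and $N = 3$ is precisely the composition axiom.) This is exactly why the identity axiom is genuinely needed here: without it the degenerate base case could fail, which is the whole point of folding composition and identity into one statement.

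The key intermediate step is a lemma that right-multiplication preserves matching: if $a :\cong b$ then $a \cdot c :\cong b \cdot c$ for every $c \in \mathbb{M}^*$. I would prove this by casework on the definition of $:\cong$. If $a = b$, the conclusion follows by reflexivity. If instead $b = \ast$, then $b \cdot c = \ast \cdot c = \ast$ by the wildcard multiplication property, and $a \cdot c :\cong \ast$ holds by Lemma \ref{wild_prop_2}. This lemma is where the asymmetry of $:\cong$ must be handled carefully, and I expect it to be the main obstacle: one has to check that the wildcard is absorbed on the constraint (right) side in exactly the direction the matching relation demands, rather than naively treating $:\cong$ like equality.

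For the inductive step, I would assume the claim for lists of length $N$ and consider $h_0, \dots, h_N$. The composition axiom gives $P(h_0, h_N) :\cong P(h_0, h_{N-1}) \cdot P(h_{N-1}, h_N)$. Applying the inductive hypothesis $P(h_0, h_{N-1}) :\cong \prod_{k=0}^{N-2} P(h_k, h_{k+1})$ and then the compatibility lemma, multiplying on the right by $P(h_{N-1}, h_N)$, yields $P(h_0, h_{N-1}) \cdot P(h_{N-1}, h_N) :\cong \prod_{k=0}^{N-1} P(h_k, h_{k+1})$. Chaining these two matchings via transitivity of $:\cong$ (established in the matching relation theorem) gives $P(h_0, h_N) :\cong \prod_{k=0}^{N-1} P(h_k, h_{k+1})$, completing the step and hence the induction.
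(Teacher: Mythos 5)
Your proof is correct, and it is worth noting that the paper never actually proves this proposition: it is stated bare, with only the remark that \(P(h_0, h_0)\) is matched by the empty product --- which is precisely your base case \(N = 1\). So your induction supplies an argument the paper leaves implicit. The genuine addition is your compatibility lemma, that \(a :\cong b\) implies \(a \cdot c :\cong b \cdot c\): since \(:\cong\) is not symmetric, one cannot substitute the inductive hypothesis into the composition axiom as if it were an equation, and your two-case check (if \(a = b\), conclude by reflexivity; if \(b = \ast\), then \(b \cdot c = \ast\), which matches everything by Lemma \ref{wild_prop_2}) is exactly what licenses the substitution, with the wildcard correctly absorbed on the constraint side. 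Combined with transitivity of \(:\cong\), which the paper does prove in its theorem on the matching relation, the chain \(P(h_0, h_N) :\cong P(h_0, h_{N-1}) \cdot P(h_{N-1}, h_N) :\cong \prod_{k=0}^{N-1} P(h_k, h_{k+1})\) closes the inductive step. One small point you could make explicit: the product over \(\mathbb{M}^*\) should be read via the recursion \(\prod_{k=0}^{N-1} P(h_k, h_{k+1}) = \bigl(\prod_{k=0}^{N-2} P(h_k, h_{k+1})\bigr) \cdot P(h_{N-1}, h_N)\), which is all your step uses; multiplication on \(\mathbb{M}^*\) is in fact associative (even with \(0 \cdot \infty = \ast\) and the absorbing wildcard), so the bracketing is immaterial, but that deserves a sentence. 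What your route buys over the paper's bare assertion is a verification that the identity axiom enters only at \(N = 1\) and that no hidden appeal to symmetry of \(:\cong\) sneaks in; what the paper's presentation buys is brevity, treating the proposition as an evident repackaging of the composition and identity axioms.
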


In this case, \(P(h_0, h_0)\) would be matched by the empty product, which is 1.

The inverse axiom is nearly redundant as well. Since \(P(h_0, h_0) :\cong P(h_0, h_1) \cdot P(h_1, h_0)\), the terms in the constraint look like they must be inverses. But without stating the axiom explicitly, there could be a case where \(P(h_0, h_1)\) is some non-wildcard magnitude like 2 but \(P(h_1, h_0)\) is \(\ast\). This shouldn't be allowed because \(\ast\) represents a lack of knowledge about a value, and we consider \(P(h_1, h_0)\) and \(P(h_1, h_0)\) to be the same piece of information but in reverse.

\subsection{Examples}

Now that the definition of relative probability is squared away, we can construct a library of examples for common RPFs that will serve as building blocks to describing common situations.

\begin{definition}
\label{def:uniform_rpf}
The \textit{uniform} RPF on any \(\Omega\) considers each outcome equally likely. In other words, \(P(h_1, h_2) = 1\) for every pair of outcomes.
\end{definition}

\begin{definition}
\label{def:indeterminate_rpf}
The \textit{indeterminate} RPF has \(P(h_1, h_2) = \ast\) for every pair of outcomes.
\end{definition}

\begin{definition}
A \textit{certain} RPF contains a single outcome that has infinite probability relative to all other outcomes. Let \(h_C\) be the certain outcome with \(h_C \neq h\). Then \(P(h_C, h) = \infty\). The relative probability of the other \(K-1\) outcomes could be anything.
\end{definition}

\begin{definition}
\label{def:empty_rpf}
The \textit{empty} RPF has no outcomes because \(K = 0\), and therefore the function \(P\) has no valid inputs.
\end{definition}

It is surprising that there is still an RPF with \(\Omega = \varnothing\). This is not the case for absolute distributions where such a function does not exist (because with no outcomes, they cannot sum to 1).

\begin{definition}
\label{def:unit_rpf}
The \textit{unit} RPF has a single outcome where \(K = 1\) and \(\Omega = h\). There is only one such RPF where \(P(h, h) = 1\).
\end{definition}

The unit RPF is both uniform and certain. This corresponds to the absolute case where the probability of the single outcome must be 1.

\begin{definition}
\label{def:finite_geometric_rpf}
Let \(P\) be an RPF with K outcomes labeled \((h_0, h_1, ..., h_{K-1})\). \(P\) is a \textit{finite geometric} RPF with ratio \(r\) if the relative probabilities of each outcome with its neighbor is always \(r\). In other words, for all \(i \in (0, 1, ..., K-2)\),
\[P(h_{i+1}, h_i) = r\]
When \(r\) is 0 or \(\infty\), we can call this the \textit{limit finite geometric} RPF.
\end{definition}

Finally, to include an example that is both common and has powerful applications, the relative version of the binomial distribution can be defined as follows:

\begin{definition}
\label{def:binomial_rpf}
A \textit{binomial distribution} has a sample size \(n\), and a probability of success \(p\). The RPF has outcome space \(\Omega = \{0, 1, 2, ..., n\}\) and thus \(K = n + 1\). It is given as follows:
\[P(h_1, h_2) = \frac{h_2!(n-h_2)!}{h_1!(n-h_1)!}\left(\frac{p}{1-p}\right)^{h_1 - h_2}\]
\end{definition}

\section{Concepts for Relative Probability Functions}

We defined the relative probability function in section \ref{section:new_relative_prob} with the fundamental axioms and have constructed some examples. Because new situations arise that do not occur in the Kolmogorov model, we also need to define some new vocabulary.

Figure \ref{fig:flow_chart} gives us a roadmap of these new concepts and their relationship to each other.

\begin{figure}
\begin{tikzpicture}    %Flowchart
[node distance = 1.3in] %Node distance

%node0 for Relative Probability Functions
\node (node0) [roundbox] {Relative Probability Function \\ Definition \ref{def:fundamental_laws}} ;

%node1 Anchored RPF
\node (node1) [circlebox, below of=node0] {Anchored RPF\\ Definition \ref{def:anchored_rpf}};

%lnode1 Empty RPF
\node (lnode1) [recbox, fill=shaded_green,below left=0.5 and 2.2 of node0] {Empty RPF\\ Definition \ref{def:empty_rpf}};

%lnode2 Indeterminate RPF
\node (lnode2) [recbox, fill=shaded_green,below left=2 and 0.5 of node0] {Indeterminate RPF\\ Definition \ref{def:indeterminate_rpf}};

%node2 Non-empty Totally Comparable RPF
\node (node2) [roundbox, below right= and 2 of node1] {Non-empty\\ Totally Comparable RPF\\ Definition \ref{def:totally_comparable}};

%bnode2 Finite Limit Geometric Distribution, \(K>2\)
\node (bnode2) [recbox, fill=shaded_green, below right=1 and -3.5 of node2] {Finite Limit \\Geometric Distribution,\\ \(K>2\)\\ Definition \ref{def:finite_geometric_rpf}};

%node3 Absolute Probability\\ Function
\node (node3) [roundbox, below left= and 2 of node1] {Absolute Probability\\ Function\\ Definition \ref{def:categorical_abs}};

%node4 Union
\node (node4) [circlebox, below right=and 2.8 of node3] {Union};

%node5 Facet Probability\\ Functions\\(single impossible event)
\node (node5) [box1, below left=and -0.5 of node4,yshift=-0.38in] {Facet Probability\\ RPF\\(single impossible event)\\};

%node6 Mutually Possible\\ RPFs
\node (node6) [box1, right=-0.07 of node5] {Totally Mutually Possible\\ RPF\\ Definition \ref{def:totally_mutually_possible}\\};

%rnode4 (Non-Empty)\\ Uniform
\node (rnode4) [recbox, fill=shaded_green, below of=node6,xshift=-1.0in,yshift=-0.0in] {(Non-Empty)\\ Uniform\\ Definition \ref{def:uniform_rpf}};

%rnode3 Unit\\ Distribution
\node (rnode3) [recbox, fill=shaded_green, below of=node6,yshift=-0.1in] {Unit\\ Distribution\\ Definition \ref{def:unit_rpf}};

%rnode2 Finite Geometric\\Distribution
\node (rnode2) [recbox, fill=shaded_green, below of=node6,xshift=1.1in,yshift=-0.0in] {Finite Geometric\\Distribution\\ Definition \ref{def:finite_geometric_rpf}};

%rnode1 Binormal\\Distribution
\node (rnode1) [recbox, fill=shaded_green, right=1 of node6] {Binomial\\Distribution\\ Definition \ref{def:binomial_rpf}};

%lnode3 Limit Finite\\Geometry\\ with \(K=2\)
\node (lnode3) [recbox, fill=shaded_green, below of=node5,yshift=-0.0in,xshift=-0.5in] {Limit Finite\\Geometric\\ with \(K=2\)};

%Arrow paths for all nodes
\path[papLine] (rnode1) -- (node6);

\path[papLine] (rnode2) --+ (node6);

\path[papLine] (rnode3) -- (node6);

%\path[papLine] (rnode4) --+(0,0.56in)--+(1in,0.56in)-- +(1in,0.87in); (node6);
\path[papLine] (rnode4) --+ (node6);

\path[papLine] (lnode1) --+ (node0);

\path[papLine] (lnode2) --+ (node0);

\path[papLine] (node1) -- (node0);

\path[papLine] (node2) -- (node1) node[pos=0.4, above, sloped]{Lemma \ref{lemma:totally_comp_anchored}};

\path[papLine] (node3) -- (node1) node[pos=0.5, above, sloped]{Theorem \ref{thm:absolute_anchored}};

\path[papLine] (node4) -- (node3);

\path[papLine] (node4) -- (node2);

\draw[-stealth,transform canvas={xshift=1mm},line width=0.5mm] (lnode3) -- +(0,0.86in);

\draw[-stealth,transform canvas={xshift=24mm},line width=0.5mm] (node5) --+(0,1.22in) node[left,pos=0.5]{Theorem \ref{thm:abs_totally_comparable}};

\path[papLine,dashed] (node1) |- (node3) node[above,pos=0.75]{Matched By} node[below,pos=0.75]{Theorem \ref{thm:absolute_prob_formula}};

\path[papLine] ([xshift=0.6in]bnode2) -- (node2);
\end{tikzpicture}
\caption{This is a roadmap for all of the sub-types of relative probability functions and their relationship to one another. }
\label{fig:flow_chart}
\end{figure}
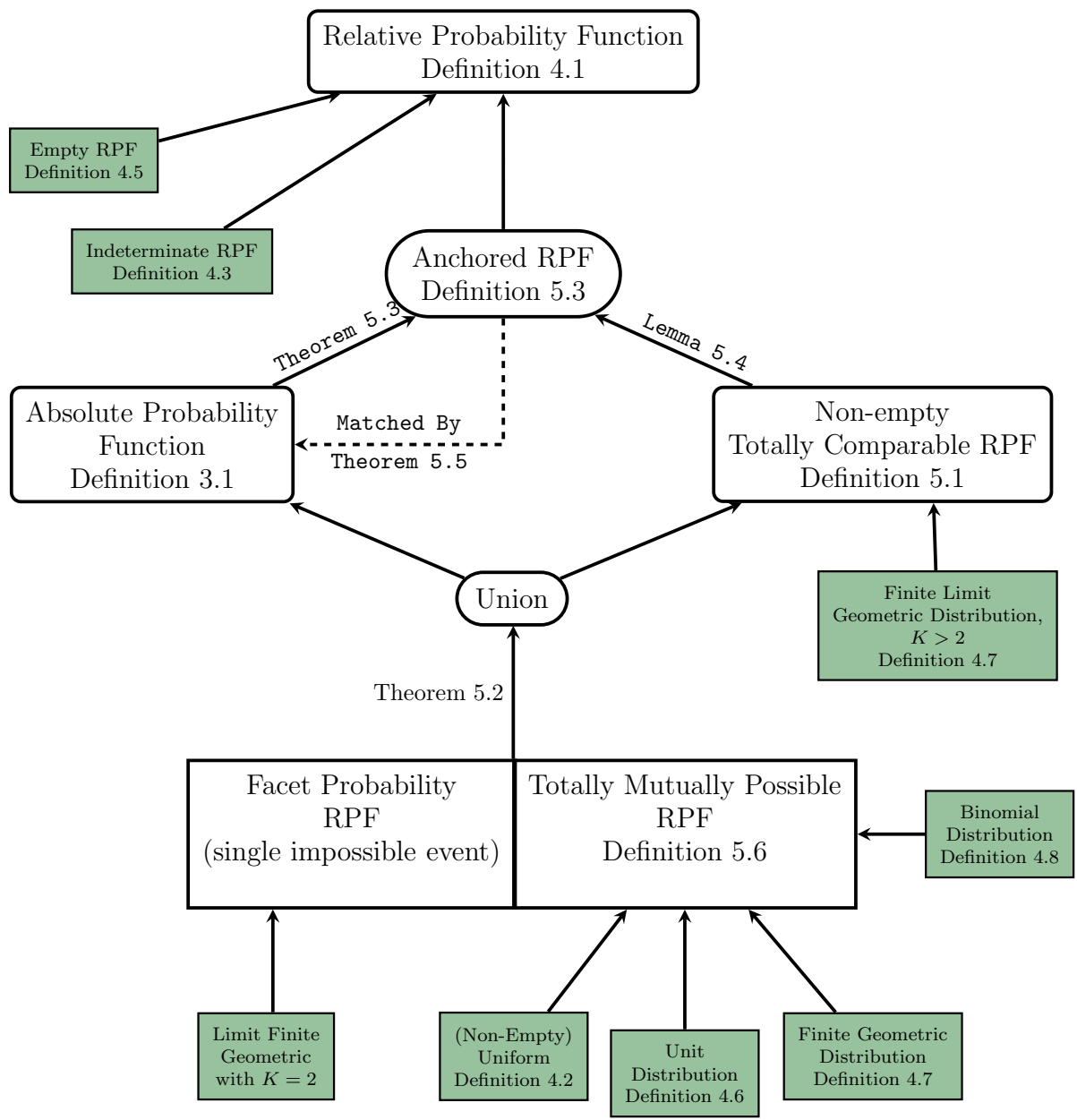

\subsection{Absolute Probability as a Special Case}

Fortunately, the absolute probability function is a special case of an RPF (through definition \ref{def:ratio}), defined by \(P(h_1, h_2) = \frac{P(h_1)}{P(h_2)}\) with the exception that if \(P(h) = 0\), then \(P(h, h) = 1\) instead of \(\ast\) in order to satisfy the identity axiom.

\begin{theorem}
A relative probability function derived from an absolute probability function satisfies the fundamental axioms.
\end{theorem}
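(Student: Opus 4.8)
The plan is to verify each of the three fundamental axioms of Definition \ref{def:fundamental_laws} directly from the ratio formula $P(h_1, h_2) = P(h_1)/P(h_2)$, exploiting one structural simplification: since an absolute probability function takes values in $[0,1]$, the value $\infty$ never appears in a numerator or denominator. Every $P(h)$ is therefore either $0$ or a positive finite number, which collapses the number of edge cases one must track. The identity axiom then requires no work beyond the stated convention: when $P(h) > 0$ we have $P(h,h) = P(h)/P(h) = 1$, and when $P(h) = 0$ the definition was patched precisely so that $P(h,h) = 1$.

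For the inverse axiom I would argue by cases on the denominators. When $P(h_1)$ and $P(h_2)$ are both positive, $P(h_1, h_2) = P(h_1)/P(h_2)$ and $P(h_2, h_1) = P(h_2)/P(h_1)$ are ordinary reciprocals. When exactly one vanishes, say $P(h_2) = 0 < P(h_1)$, we get $P(h_1, h_2) = \infty$ and $P(h_2, h_1) = 0$, which are inverses under the conventions $0^{-1} = \infty$ and $\infty^{-1} = 0$. The only delicate case is $P(h_1) = P(h_2) = 0$, where both ratios are the indeterminate form and hence equal $\ast$; this forces us to read the inverse axiom with $\ast^{-1} = \ast$, consistent with the wildcard denoting an unknown value whose reciprocal is equally unknown.

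The composition axiom is where the real content lies, and I would split on the middle value $P(h_2)$. If $P(h_2)$ is positive and finite, then in $P(h_1, h_2)\cdot P(h_2, h_3) = \frac{P(h_1)}{P(h_2)}\cdot\frac{P(h_2)}{P(h_3)}$ the middle factor cancels cleanly, yielding $P(h_1)/P(h_3) = P(h_1, h_3)$ whenever $P(h_1)$ and $P(h_3)$ are not both zero; when they are both zero, each side reduces to $\ast$ and the match holds by Lemma \ref{wild_prop_2}. If instead $P(h_2) = 0$, then one factor is $0$ or $\ast$ and the other is $\infty$ or $\ast$, so the product always evaluates to $\ast$ (via $0\cdot\infty = \ast$ and $\ast\cdot m = \ast$); since every element is matched by the wildcard, the relation $P(h_1, h_3) :\cong \ast$ holds automatically, again by Lemma \ref{wild_prop_2}.

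The main obstacle, or rather the main conceptual point to get right, is exactly this last case: when the middle outcome has probability zero, the product on the right can degenerate to $\ast$ even though the left side $P(h_1, h_3)$ is a perfectly determinate positive number. This is why the composition axiom must be stated with the asymmetric matching relation $:\cong$ rather than equality; the wildcard on the constraint side absorbs the lost information without creating a contradiction. Once this asymmetry is in hand, the remaining verifications are routine arithmetic in $\mathbb{M}^*$.
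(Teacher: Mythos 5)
Your proof is correct and follows essentially the same route as the paper's: verify each axiom from the ratio definition, with the decisive case split on whether $P(h_2) = 0$ in the composition axiom, where the product degenerates to $\ast$ and matching holds because the wildcard matches everything. You are somewhat more thorough than the paper on the edge cases --- explicitly handling $P(h_1) = P(h_2) = 0$ in the inverse axiom (noting the implicit convention $\ast^{-1} = \ast$, which the paper's one-line algebraic manipulation glosses over) and the sub-case $P(h_1) = P(h_3) = 0$ in composition --- but these are refinements of the same argument, not a different approach.
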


\begin{proof}
For the identity axiom, we have \(P(h, h) = 1\) built into the definition.

For the inverse axiom, we can see that for outcomes \(h_1\) and \(h_2\), \(P(h_1, h_2) = \frac{P(h_1)}{P(h_2)} = \left(\frac{P(h_2)}{P(h_1)}\right)^{-1} = P(h_2, h_1)^{-1}\).

For composition, we must show that for all outcomes \(h_1, h_2, h_3\), \(P(h_1, h_3) :\cong P(h_1, h_2) \cdot P(h_2, h_3)\)

Start with the case that \(P(h_2)\neq 0\). Then \(P(h_1, h_2) \cdot P(h_2, h_3) = \frac{P(h_1)}{P(h_2)}\frac{P(h_2)}{P(h_3)} = \frac{P(h_1)}{P(h_3)} = P(h_1, h_3)\). When \(P(h_2) = 0\), \(P(h_1, h_2) \cdot P(h_2, h_3) = \frac{P(h_1)}{P(h_2)}\frac{P(h_2)}{P(h_3)} = \ast\). Because \(\ast\) matches everything, then the matching statement holds. Because it holds in both when \(P(h_2) = 0\) and \(P(h_2) \neq 0\), the theorem holds true always.
\end{proof}

\subsection{Comparability, Possibility, and Anchors}

\begin{definition}
\label{def:totally_comparable}
A relative probability function is \textit{totally comparable} if every pair of outcomes are comparable.
\end{definition}

\begin{theorem}
\label{thm:abs_totally_comparable}
An absolute probability function is totally comparable if and only if \(P(h) = 0\) for at most one outcome.
\end{theorem}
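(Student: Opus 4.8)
The plan is to prove both directions of the biconditional by pinning down exactly when the wildcard $\ast$ appears in the ratio $P(h_1, h_2) = P(h_1)/P(h_2)$. The central observation is that, because $\ast$ arises only from the indeterminate form $0/0$, a pair of \emph{distinct} outcomes $h_1 \neq h_2$ fails to be comparable precisely when $P(h_1) = 0$ and $P(h_2) = 0$ simultaneously. If exactly one of the two values vanishes, the ratio is either $0$ or $\infty$ (using the conventions $0^{-1} = \infty$ and $\infty^{-1} = 0$ on the magnitude space), both of which are genuine magnitudes and hence non-wildcard. Self-comparisons $P(h, h)$ are always $1$ by the identity-axiom exception built into the definition, so they never cause incomparability. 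This reduces the whole theorem to counting how many outcomes can carry zero probability.

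For the contrapositive of the forward direction, I would suppose that two distinct outcomes $h_1, h_2$ both satisfy $P(h_1) = P(h_2) = 0$. Then $P(h_1, h_2) = 0/0 = \ast$, so $h_1$ and $h_2$ are not comparable and the function fails to be totally comparable. This shows that total comparability forces at most one zero-probability outcome.

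For the converse, I would assume $P(h) = 0$ for at most one outcome and check comparability of an arbitrary pair. A pair consisting of a single repeated outcome is comparable with value $1$. For two distinct outcomes, at most one can have zero probability, so the ratio $P(h_1)/P(h_2)$ never hits the indeterminate form and lands in $\mathbb{M}$, giving a non-wildcard value. Since every pair is comparable, the function is totally comparable.

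The argument is essentially bookkeeping against the definition of the ratio, so there is no deep obstacle. The one point requiring care is treating the self-comparison case $h_1 = h_2$ separately, since the definition overrides $0/0$ with $1$ there; overlooking this would wrongly suggest that a lone zero-probability outcome is incomparable with itself. Making explicit that the ``every pair of outcomes'' in the definition of total comparability includes the diagonal pairs, and that these are resolved by the identity exception, is the only subtlety worth flagging.
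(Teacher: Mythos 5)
Your proposal is correct and takes essentially the same approach as the paper's proof: a direct case analysis showing that the wildcard \(\ast\) arises in the ratio \(P(h_1)/P(h_2)\) exactly when two \emph{distinct} outcomes both carry probability zero, with single-zero pairings yielding \(0\) or \(\infty\) and all other pairings yielding positive ratios. Your explicit handling of the diagonal pairs via the identity-axiom exception is slightly more careful than the paper's one-line mention of \(P(h_1,h_1)=1\), but it is the same argument.
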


\begin{proof}
Let P be an \textbf{absolute} probability function with distinct outcomes \(h_1\) and \(h_2\). If \(P(h_1) = P(h_2) = 0\), then \(P(h_1, h_2) = \frac{0}{0} = \ast\). If only outcome \(h_1\) is assigned 0, then \(P(h_1, h_1) = 1\), \(P(h_1, h_2) = 0\), and \(P(h_2, h_1) = \infty\). Any other pairing that does not involve \(h_1\) will be the quotient of two positive numbers, and thus not \(\ast\).
\end{proof}

Once it is established that outcomes are comparable to one another, they form groupings. If an outcomes \(h_1\) and \(h_2\) both have probability 0 (or \(\infty\)) with respect to \(h_3\) then \(P(h_1, h_2)\) is not predetermined by composition. This leads us to the concept of possibility and impossibility.

\begin{definition}
Outcome \(h_1\) is \textit{impossible} with respect to \(h_2\) if \(P(h_1, h_2) = 0\). Outcome \(h_1\) is \textit{possible} with respect to \(h_2\) if it is comparable and \(P(h_1, h_2) > 0\).
\end{definition}

Anchor outcomes are those outcomes that have a non-zero absolute probability. The anchoring of a distribution ensures that it is well behaved.

\begin{definition}
\label{def:anchored_rpf}
An \textit{anchor} of an RPF is an outcome that is possible with respect to every other outcome. An RPF that has at least 1 anchor is called an \textit{anchored RPF}.
\end{definition}

\begin{theorem}
\label{thm:absolute_anchored}
All absolute probability distributions are anchored.
\end{theorem}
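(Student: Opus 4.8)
The plan is to reduce the statement to the elementary fact that an absolute distribution must assign positive probability to at least one outcome, and then to verify that any such outcome serves as an anchor. First I would observe that an absolute probability function cannot have an empty outcome space: since $\sum_{h \in \Omega} P(h) = 1 \neq 0$ while an empty sum evaluates to $0$, the space $\Omega$ must be nonempty. This rules out the degenerate empty RPF and guarantees there is something to anchor.

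Next I would establish the key characterization: an outcome $h^*$ is an anchor of the derived RPF exactly when $P(h^*) > 0$. Recall that $h^*$ being possible with respect to $h$ means $P(h^*, h)$ is comparable (not $\ast$) and strictly positive, and that $P(h^*, h) = P(h^*)/P(h)$ by Definition~\ref{def:ratio}. I would argue by cases on $P(h^*)$. If $P(h^*) > 0$, then for every other outcome $h$ the ratio is either a positive real (when $P(h) > 0$) or $\infty$ (when $P(h) = 0$); in either case it is comparable and positive, so $h^*$ is possible with respect to every $h$. Conversely, if $P(h^*) = 0$, then $P(h^*, h)$ is either $0$ (when $P(h) > 0$) or $\ast$ (when $P(h) = 0$), so $h^*$ fails to be possible with respect to at least one outcome and cannot be an anchor. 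For the theorem itself only the forward implication is strictly needed, but the full characterization clarifies the picture.

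Finally, I would produce such an outcome. Since $\Omega$ is finite and the values $P(h)$ are nonnegative with $\sum_{h \in \Omega} P(h) = 1$, they cannot all be zero, for otherwise the sum would be $0$. Hence there exists $h^*$ with $P(h^*) > 0$, which by the characterization is an anchor, so the distribution is anchored.

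The main obstacle is not computational but conceptual bookkeeping around the three special values $0$, $\infty$, and $\ast$ together with the precise meaning of ``possible.'' The one point deserving care is confirming that an outcome with positive probability remains possible with respect to an \emph{impossible} outcome, that is, that $P(h^*, h) = \infty$ counts as possible (comparable and $> 0$) rather than as a failure. Once the definitions are unwound in this way, the existence argument is immediate.
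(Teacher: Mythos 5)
Your proof is correct and takes essentially the same approach as the paper's: both extract an outcome $h^*$ with $P(h^*) > 0$ from the normalization $\sum_{h \in \Omega} P(h) = 1$ and verify that $P(h^*, h') = P(h^*)/P(h') > 0$ for every comparison outcome $h'$. Your extra care about the case $P(h') = 0$, where the ratio is $\infty$ but still counts as possible since $\infty > 0$ in $\mathbb{M}$, merely makes explicit a point the paper's one-line verification leaves implicit.
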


\begin{proof}
Let P be an absolute probability distribution on \(\Omega\). Because \(\sum_{h \in \Omega} P(h) = 1\), there must be at least one \(h\) such that \(P(h) > 0\).  Therefore, for any comparison outcome \(h'\), \(P(h, h') = \frac{P(h)}{P(h')} > 0\).
\end{proof}

\begin{lemma}
\label{lemma:totally_comp_anchored}
Every non-empty, totally comparable RPF is anchored.
\end{lemma}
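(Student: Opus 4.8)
The plan is to recognize that the relation ``possible with respect to'' — say $h_1 \succeq h_2$ whenever $P(h_1, h_2) > 0$ — is a total preorder on the finite set $\Omega$, and that an anchor is precisely a greatest element of this preorder; on a finite non-empty set such an element must exist. Reflexivity of $\succeq$ comes straight from the identity axiom ($P(h,h)=1>0$), totality from the inverse axiom together with total comparability (for any pair, either $P(h_1,h_2)>0$, or else $P(h_1,h_2)=0$ forces $P(h_2,h_1)=\infty>0$), and transitivity from the composition axiom. Rather than invoke an abstract existence theorem, I would realize this concretely through a contradiction argument driven by finiteness.

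Assume $P$ is non-empty and totally comparable but has no anchor. The first step is to translate ``$h$ is not an anchor'' into a magnitude statement: since every pair is comparable, the only way $h$ can fail to be possible with respect to some $h'$ is $P(h, h') = 0$, and by the inverse axiom this is the same as $P(h', h) = \infty$, i.e. $h'$ strictly dominates $h$. Note the witnessing $h'$ must be distinct from $h$, since $P(h,h)=1>0$ by the identity axiom, so each such step genuinely moves through $\Omega$. Starting from any $h_0$ (available since $\Omega \neq \varnothing$), I would iterate this to build a strict dominance chain $h_0, h_1, h_2, \ldots$ with $P(h_{k+1}, h_k) = \infty$ for every $k$.

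The core of the argument is that strict dominance is transitive: if $P(h_{k+1}, h_k) = \infty$ and $P(h_k, h_{k-1}) = \infty$, then composition gives $P(h_{k+1}, h_{k-1}) :\cong \infty \cdot \infty = \infty$, and total comparability rules out $\ast$ on the left, so Lemma \ref{wild_prop_1} upgrades the match to the equality $P(h_{k+1}, h_{k-1}) = \infty$. Iterating yields $P(h_j, h_i) = \infty$ for all $i < j$ along the chain. Finiteness of $\Omega$ then forces the sequence to repeat, say $h_i = h_j$ with $i < j$, so $P(h_i, h_i) = P(h_j, h_i) = \infty$, contradicting the identity axiom $P(h_i, h_i) = 1$. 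This contradiction delivers an anchor.

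The step I expect to be the main obstacle is precisely the transitivity upgrade from $:\cong$ to $=$. The composition axiom only guarantees a match, and the right-hand product can in general collapse to $\ast$ through the indeterminate form $0 \cdot \infty$; this is exactly why I work along the strict-dominance chain, where both factors equal $\infty$ and no indeterminate product can arise, and why total comparability is indispensable for converting the matching relation into genuine equality. I would be careful to present the transitivity as an induction on chain length (or cite the Path Composition proposition) so that the conclusion $P(h_j,h_i)=\infty$ is fully justified before deriving the contradiction with the identity axiom.
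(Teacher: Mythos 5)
Your proof is correct and follows essentially the same route as the paper's: assume no anchor exists, use total comparability to extract a witness with relative probability $0$ (equivalently $\infty$ in your inverse formulation), build a chain by iterating, propagate the extreme value along the chain via the composition axiom (where $\infty \cdot \infty$, like the paper's $0 \cdot 0$, avoids the indeterminate form and Lemma \ref{wild_prop_1} forces equality), and let finiteness of $\Omega$ produce a cycle contradicting the identity axiom. The only cosmetic difference is that you traverse the dominance chain with $P(h_{k+1}, h_k) = \infty$ where the paper uses $P(h, f^n(h)) = 0$, a trivial dualization through the inverse axiom.
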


\begin{proof}
Let \(P\) be a non-empty and totally comparable RPF. To prove by contradiction, assume that arbitrary outcome \(h\) is not an achor, and therefore there exists another outcome \(h'\) such that \(P(h, h') = 0\).

Create a function \(f: \Omega \rightarrow \Omega\) that finds a potential \(h' = f(h)\) so that \(P(h, f(h)) = 0\).

Let \(f^n\) be the function \(f\) applied n times. Then \(P(h, f^n(h)) = 0\) for all n greater than 0. This is by induction because the case of \(n = 1\) was assumed above, and for the inductive step
\[P(h, f^{n+1}(h)) :\cong P(h, f^n(h)) \cdot P(f^n(h), f(f^n(h)) = 0 \cdot 0 = 0\]

Because \(\Omega\) is finite, repeated applications of \(f\) on \(h\) must eventually return to an outcome that has already been visited. In more rigorous terms, there exists an \(N\) such that \(f^N(h) = f^i(h)\) for some \(i < N\).

This is a contradiction because \(P(f^i(h), f^N(h))\) should equal 0 by the argument above, but 1 by the identity axiom.
\end{proof}

A totally comparable RPF contains the maximum amount of information about the relative probability of its outcomes. Some RPFs have less information but are nevertheless consistent with those that have more. The following definition describes this relationship.

\begin{definition}
Let \(P_1\) and \(P_2\) be relative probability functions. \(P_1\) is matched by \(P_2\) if and only if for all outcomes \(h_1\) and \(h_2\),
\[P_1(h_1, h_2) :\cong P_2(h_1, h_2).\]
\end{definition}

\begin{theorem}
\label{thm:absolute_prob_formula}
Every anchored RPF is matched by an absolute probability function, given by the following equation where \(a\) is an anchor outcome.
\[P(h) = \frac{P(h, a)}{\sum_{h' \in \Omega}P(h', a)}\]
\end{theorem}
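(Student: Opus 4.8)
The plan is to verify two separate claims: that the displayed formula defines a bona fide categorical distribution in the sense of Definition~\ref{def:categorical_abs}, and that the RPF it induces by Definition~\ref{def:ratio} matches the given anchored RPF $P$. Write $Q(h) = P(h,a)/S$ with $S = \sum_{h'\in\Omega} P(h',a)$ for the candidate distribution, and let $Q(h_1,h_2)=Q(h_1)/Q(h_2)$ denote the RPF it induces.

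First I would check that $Q$ is well defined. Because $a$ is an anchor, $P(a,h')$ is comparable and strictly positive for every $h'$, so by the inverse axiom $P(h',a)=P(a,h')^{-1}$ is a genuine magnitude with $P(h',a)<\infty$. Since $\Omega$ is finite, $S$ is therefore a finite real number, and it is strictly positive because its $h'=a$ term equals $P(a,a)=1$ by the identity axiom. Consequently each $Q(h)\in[0,1]$ and $\sum_{h}Q(h)=S/S=1$, so $Q$ is a valid categorical distribution.

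The heart of the argument is the matching claim $P(h_1,h_2):\cong Q(h_1,h_2)$ for all $h_1,h_2$. I would route every comparison through the anchor: the composition axiom gives $P(h_1,h_2):\cong P(h_1,a)\cdot P(a,h_2)$, and the inverse axiom rewrites the right-hand factor as $P(h_1,a)\cdot P(h_2,a)^{-1}$. Cancelling the common factor $S$ shows that this product equals $Q(h_1)/Q(h_2)=Q(h_1,h_2)$ in every case except one, so that $P(h_1,h_2)$ is matched by a quantity equal to $Q(h_1,h_2)$, which is exactly what matching requires. I would organise the verification of the equality by the value of $P(h_2,a)$: when it is positive the product is an ordinary ratio, and when it is $0$ the product is $\infty$ or $\ast$ according to whether $P(h_1,a)$ is positive or $0$, matching the ratio convention in both sub-cases.

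The single exceptional pair, and the place I expect to have to argue most carefully, is the diagonal $h_1=h_2=h$ with $P(h,a)=0$: there the product $P(h,a)\cdot P(a,h)$ evaluates to the indeterminate $0\cdot\infty=\ast$, whereas the ratio-RPF convention sets $Q(h,h)=1$. This case I would dispatch immediately from the identity axiom, since $P(h,h)=1=Q(h,h)$ makes the matching an equality. More broadly, I expect the main obstacle to be the bookkeeping in the extended arithmetic of $\mathbb{M}^*$—confirming that it is precisely the anchor hypothesis that forbids an infinite denominator, and that wherever a $0\cdot\infty$ appears on one side it is absorbed by the wildcard on the other (which matches everything by Lemma~\ref{wild_prop_2})—rather than any conceptually deep step.
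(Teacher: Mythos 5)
Your proposal is correct and follows essentially the same route as the paper's proof: you use the anchor hypothesis to keep the normalizing sum $S$ finite and bounded below by the term $P(a,a)=1$, and you obtain the matching claim by composing through the anchor via $P(h_1,h_2) :\cong P(h_1,a)\cdot P(a,h_2)$ and cancelling $S$, exactly as the paper does. Your explicit case analysis in $\mathbb{M}^*$ — especially the diagonal pair with $P(h,a)=0$, where the constraint degenerates to $0\cdot\infty=\ast$ but the identity axiom gives $P(h,h)=1=Q(h,h)$ — is a careful treatment of a corner the paper's one-line calculation passes over silently, but it is a refinement of the same argument rather than a different approach.
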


\begin{proof}
We need to show that \(P(h)\) is a valid absolute probability function, and that it matches the original RPF.

Because \(a\) is an anchor element, \(P(h', a) < \infty\). Therefore the sum \(\sum_{h' \in \Omega}P(h', a) < \infty\). \(\sum_{h' \in \Omega}P(h', a)\) is also \(>0\), because it includes the term \(P(a, a) = 1\). The numerator \(P(h, a)\) is also \(< \infty\). Therefore, \(P(h) \notin \{\infty, \ast\}\).

We next check that the values of \(P(h)\) sum to 1 as follows:
\[\sum_{h \in \Omega}P(h) = \sum_{h \in \Omega} \frac{P(h, a)}{\sum_{h' \in \Omega}P(h', a)} = \frac{\sum_{h \in \Omega}P(h, a)}{\sum_{h' \in \Omega}P(h', a)} = 1\]

Cancellation of these equal sums is justified because we have shown previously that they cannot be \(0\) or \(\infty\).

Therefore, \(P(h)\) is a valid absolute probability function. We show that the RPF is matched by it though the following calculation:
\begin{equation}
P(h_1, h_2) :\cong P(h_1, a) \cdot P(a, h_2) = \frac{P(h_1, a)}{\sum_{h' \in \Omega}P(h', a)} \div \frac{P(h_2, a)}{\sum_{h' \in \Omega}P(h', a)} = \frac{P(h_1)}{P(h_2)}
\end{equation}
\end{proof}

\subsection{Mutual Possibility}

\begin{definition}
Outcomes \(h_1\) and \(h_2\) are \textit{mutually possible} if they are comparable and \(0 < P(h_1, h_2) < \infty\). In other words, \(h_1\) and \(h_2\) are each possible with respect to the other.
\end{definition}

\begin{theorem}
Mutual possibility is an \textit{equivalence relation}, being reflexive, symmetric and transitive.
\end{theorem}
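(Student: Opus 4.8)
The plan is to verify the three properties of an equivalence relation in turn, drawing on exactly one of the three fundamental axioms of Definition \ref{def:fundamental_laws} for each. Throughout, recall that mutual possibility of $h_1$ and $h_2$ requires both that they are comparable (so $P(h_1,h_2) \neq \ast$) and that $0 < P(h_1,h_2) < \infty$; the comparability clause is what forces me to keep an eye on the wildcard at every step.

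For \emph{reflexivity}, I would simply invoke the identity axiom: $P(h,h) = 1$. Since $1$ is a genuine magnitude with $0 < 1 < \infty$ and is not the wildcard, $h$ is comparable to itself and hence mutually possible with itself.

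For \emph{symmetry}, suppose $h_1$ and $h_2$ are mutually possible, so $0 < P(h_1,h_2) < \infty$. The inverse axiom gives $P(h_2,h_1) = P(h_1,h_2)^{-1}$. Because inversion maps the open interval $(0,\infty)$ onto itself and sends a non-wildcard to a non-wildcard, it follows that $0 < P(h_2,h_1) < \infty$, so $h_2$ and $h_1$ are mutually possible as well.

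The main obstacle is \emph{transitivity}, which is where the one-directional nature of the matching relation must be handled with care. Assume $0 < P(h_1,h_2) < \infty$ and $0 < P(h_2,h_3) < \infty$. The composition axiom only asserts $P(h_1,h_3) :\cong P(h_1,h_2)\cdot P(h_2,h_3)$, a match rather than an equality, so a priori $P(h_1,h_3)$ could still be a wildcard. The key observation is that the product on the right is unambiguous: since both factors lie strictly inside $(0,\infty)$, the indeterminate form $0\cdot\infty$ never arises, and the product is itself a magnitude in $(0,\infty)$ and in particular is not $\ast$. I would then apply Lemma \ref{wild_prop_1}, which says that a value matching a non-wildcard constraint must equal it, to conclude $P(h_1,h_3) = P(h_1,h_2)\cdot P(h_2,h_3) \in (0,\infty)$. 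This shows $h_1$ and $h_3$ are comparable and mutually possible, completing transitivity and hence the proof.
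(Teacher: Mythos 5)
Your proof is correct and takes essentially the same route as the paper's: the identity axiom for reflexivity, the inverse axiom for symmetry, and the composition axiom together with the observation that a product of two values in $(0,\infty)$ avoids the indeterminate form for transitivity. Your explicit appeal to Lemma \ref{wild_prop_1} to upgrade the match to an equality merely spells out a step the paper leaves implicit.
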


\begin{proof}
Reflexive: \(P(h, h) = 1\) by the identity axiom.

Symmetric: \(P(h_1, h_2) = P(h_2, h_1)^{-1}\), which means that each can be in \(\{0, \infty, \ast\}\) if and only if the other is as well.

Transitive:  The composition axiom states that \(P(h_1, h_3) :\cong P(h_1, h_2) \cdot P(h_2, h_3)\). If the factors in the constraint (\(P(h_1, h_2)\) and \(P(h_2, h_3)\)) are positive and finite, then their product is also positive and finite.
\end{proof}

\begin{definition}
\label{def:totally_mutually_possible}
An RPF is \textit{totally mutually possible} if all of its outcomes\footnote{This is one of the few definitions that cannot later be upgraded from outcomes to events. The empty event \(e = \{\}\) for example will be impossible with respect to any outcome by theorem \ref{thm:empty_event_impossible}.} are mutually possible or equivalently are all anchors.
\end{definition}

It is helpful to make diagrams of possibility and impossibility through a \textit{directed graph}. In these graphs, each outcome is represented by a point, and an arrow from A to B means that B is possible with respect to A. A bidirectional arrow means that A and B are mutually possible. Totally mutually possible RPFs have a simple diagram where all the outcomes are connected as in figure \ref{fig:mutually_possible_rpf}.

\begin{figure}[h]
\centering
\resizebox{0.2\textwidth}{!}{
\begin{tikzpicture} %Totally Possible
\node(t)[circle,draw,inner sep=4pt,outer sep=1pt,line width=0.2mm,fill=black] at (0,2) {}; 
\node(ml)[circle,draw,inner sep=4pt,outer sep=1pt,line width=0.2mm,fill=black] at (-2,0.5){}; 
\node(mr)[circle,draw,inner sep=4pt,outer sep=1pt,line width=0.2mm,fill=black] at (2,0.5) {}; 
\node(bl)[circle,draw,inner sep=4pt,outer sep=1pt,line width=0.2mm,fill=black] at (-1.3,-1.8){}; 
\node(br)[circle,draw,inner sep=4pt,outer sep=1pt,line width=0.2mm,fill=black] at (1.3,-1.8){}; 

\path[stealth-stealth,draw,line width=0.3mm] (t)--(ml);
\path[stealth-stealth,draw,line width=0.3mm] (t)--(mr);
\path[stealth-stealth,draw,line width=0.3mm] (t)--(bl);
\path[stealth-stealth,draw,line width=0.3mm] (t)--(br);

\path[stealth-stealth,draw,line width=0.3mm] (ml)--(mr);
\path[stealth-stealth,draw,line width=0.3mm] (ml)--(br);
\path[stealth-stealth,draw,line width=0.3mm] (ml)--(bl);

\path[stealth-stealth,draw,line width=0.3mm] (mr)--(br);
\path[stealth-stealth,draw,line width=0.3mm] (mr)--(bl);
\path[stealth-stealth,draw,line width=0.3mm] (bl)--(br);

\end{tikzpicture}
}
\caption{A totally mutually possible RFP has - unsurprisingly - a complete graph of mutual possibility.}
\label{fig:mutually_possible_rpf}
\end{figure}
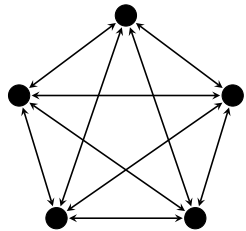

\begin{theorem}
A non-empty totally mutually possible RPF is equal to an absolute probability function.
\end{theorem}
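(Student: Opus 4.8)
The plan is to build directly on Theorem~\ref{thm:absolute_prob_formula}, which already produces an absolute probability function that \emph{matches} any anchored RPF. Since a totally mutually possible RPF has every outcome as an anchor by Definition~\ref{def:totally_mutually_possible}, and non-emptiness guarantees at least one outcome exists, I can pick any outcome $a$ and invoke that theorem to obtain the absolute function $P(h) = P(h,a) / \sum_{h' \in \Omega} P(h', a)$. The entire task then reduces to upgrading the ``matched by'' relation to genuine equality, which amounts to ruling out the wildcard $\ast$ on the constraint side everywhere.

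The key extra ingredient, absent in the general anchored case, is that \emph{every} outcome here is an anchor, not just $a$. First I would establish that the derived absolute function is strictly positive on every outcome: because each $h$ is an anchor, $P(h, a) > 0$, and the denominator is a positive, finite sum (it contains $P(a, a) = 1$ and, by the anchor property of $a$, no infinite terms, as already verified in Theorem~\ref{thm:absolute_prob_formula}). Hence $0 < P(h) < \infty$ for all $h$. Next I would examine the ratio-RPF derived from this absolute function, $P_{\text{abs}}(h_1, h_2) = P(h_1)/P(h_2)$: since both numerator and denominator are strictly positive and finite, this quotient is an ordinary positive real and in particular is never the indeterminate form $\ast$. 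Finally, Theorem~\ref{thm:absolute_prob_formula} supplies the matching statement $P_{\text{orig}}(h_1, h_2) :\cong P_{\text{abs}}(h_1, h_2)$ for every pair; combining this with the fact that the right-hand side is never $\ast$, Lemma~\ref{wild_prop_1} immediately forces $P_{\text{orig}}(h_1, h_2) = P_{\text{abs}}(h_1, h_2)$. This also holds on the diagonal, where both sides equal $1$.

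The conceptual point, and the only place requiring care, is recognizing that equality is simply matching with the wildcard excluded from the constraint side. The hard part, such as it is, is verifying the strict positivity of the derived distribution on \emph{all} outcomes, since that is exactly what closes off the single channel through which a $\ast$ could enter the derived ratio-RPF; it is also precisely the property that fails in the general anchored case, explaining why that weaker hypothesis yields only matching rather than equality. Once strict positivity is in hand, the conclusion follows with no fresh computation against the fundamental axioms.
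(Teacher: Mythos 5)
Your proposal is correct and takes essentially the same route as the paper's proof: both invoke Theorem~\ref{thm:absolute_prob_formula} to get a matching absolute probability function, use the fact that every outcome is an anchor to show the derived $P(h)$ is strictly positive and finite, and conclude that since the constraint $P(h_1)/P(h_2)$ is never $\ast$, the matching relation collapses to equality. The only cosmetic difference is that you fix a single anchor $a$ and argue $P(h,a) > 0$ from the anchor property of $h$, whereas the paper substitutes $a = h$ to obtain $P(h) = 1/\sum_{h' \in \Omega} P(h', h)$.
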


\begin{proof}
If \(P\) is totally mutually possible, then all of its outcomes are anchors. Therefore, we can use theorem \ref{thm:absolute_prob_formula} to find a matching absolute probability function
\[P(h) = \frac{P(h, a)}{\sum_{h' \in \Omega}P(h', a)}\]

Because every element of \(\Omega\) is an anchor, we can let \(a = h\) and get
\[P(h) = \frac{P(h, h)}{\sum_{h' \in \Omega}P(h', h)}=\frac{1}{\sum_{h' \in \Omega}P(h', h)}\]

Theorem \ref{thm:absolute_prob_formula} states that \(P(h_1, h_2) :\cong \frac{P(h_1)}{P(h_2)}\), but since the constraint \(\frac{P(h_1)}{P(h_2)}\) is never \(\ast\), they must be equal.
\end{proof}

\subsection{Possibility Classes}
\label{section:possibility_classes}

In order to analyze the general case of RPFs, we need to consider classes of mutual possibility.

\begin{theorem}
The relationship of being possible is both reflexive and transitive, or in other words a \textit{preorder}.
\end{theorem}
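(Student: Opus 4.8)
The plan is to verify the two defining properties of a preorder directly from the fundamental axioms. Throughout, I write $h_1 \succeq h_2$ as shorthand for ``$h_1$ is possible with respect to $h_2$,'' which by definition means that $h_1$ and $h_2$ are comparable (so $P(h_1, h_2) \neq \ast$) and $P(h_1, h_2) > 0$.

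Reflexivity is immediate. By the identity axiom $P(h, h) = 1$, which is both a genuine magnitude, so that $h$ is comparable to itself, and strictly positive. Hence $h \succeq h$ for every outcome.

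For transitivity, suppose $h_1 \succeq h_2$ and $h_2 \succeq h_3$, so that $P(h_1, h_2)$ and $P(h_2, h_3)$ are both comparable and strictly positive. I would invoke the composition axiom, $P(h_1, h_3) :\cong P(h_1, h_2) \cdot P(h_2, h_3)$, and argue that the constraint on the right is a strictly positive, non-wildcard magnitude. The subtle point — and the only real obstacle — is that the notion ``possible'' permits the value $\infty$, since the definition demands only $P(\cdot,\cdot) > 0$. So I must confirm that the product of two strictly positive elements of $\mathbb{M}$ never collapses to the wildcard. This is exactly where the product rules of Section~2 matter: $\ast$ can only be produced by $0 \cdot \infty$ (or by a factor that is already $\ast$), and neither factor here is $0$ or $\ast$. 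Consequently the product lies in $\mathbb{M}$ and is strictly positive, covering the finite-times-finite, finite-times-$\infty$, and $\infty$-times-$\infty$ cases uniformly.

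Finally, since the constraint is not a wildcard, Lemma \ref{wild_prop_1} upgrades the matching relation to genuine equality, yielding $P(h_1, h_3) = P(h_1, h_2) \cdot P(h_2, h_3)$. As just shown this value is a strictly positive magnitude, so $h_1$ and $h_3$ are comparable and $P(h_1, h_3) > 0$; that is, $h_1 \succeq h_3$. Reflexivity and transitivity together establish that being possible is a preorder, and I expect the entire argument to run in a few lines once the product-rule bookkeeping above is stated carefully.
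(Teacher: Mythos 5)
Your proof is correct and follows essentially the same route as the paper: reflexivity from the identity axiom, and transitivity by applying the composition axiom and observing that a product of strictly positive magnitudes is strictly positive and hence constrains \(P(h_1, h_3)\) to equal it. Your explicit check that the product cannot collapse to \(\ast\) (since the wildcard arises only from \(0 \cdot \infty\) or a wildcard factor) is a welcome bit of rigor that the paper's terser proof leaves implicit, but it is the same argument.
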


\begin{proof}
It is reflexive because \(P(h, h) = 1\). If \(P(h_1, h_2) > 0\) and \(P(h_2, h_3) > 0\) then their product is also greater than zero, and by composition, equal to \(P(h_1, h_3)\). Thus \(h_1\) is also possible with respect to \(h_3\).
\end{proof}

If we consider the equivalence classes of mutual possibility and their relationship to one another, then we have a \textit{partial order}. Figures \ref{fig:anchored_rpf} and \ref{fig:unanchored_rpf} are both examples of a graph of outcomes grouped by mutually possible equivalence classes. Figure \ref{fig:anchored_rpf} is anchored while figure \ref{fig:unanchored_rpf} is not.

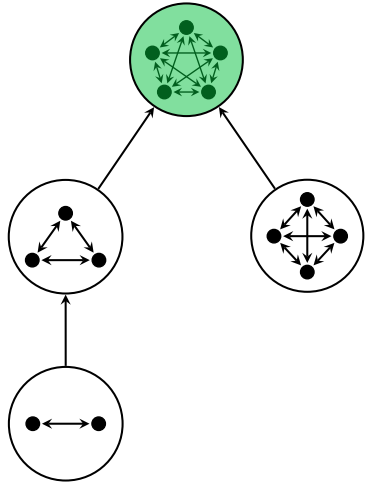
\begin{figure}[h]
\centering
\resizebox{0.3\textwidth}{!}{
\begin{tikzpicture} %Anchored
\begin{scope}[scale=0.25]
\node(t)[circle,draw,inner sep=2pt,outer sep=1pt,line width=0.2mm,fill=black] at (0,2) {}; 
\node(ml)[circle,draw,inner sep=2pt,outer sep=1pt,line width=0.2mm,fill=black] at (-2,0.5){}; 
\node(mr)[circle,draw,inner sep=2pt,outer sep=1pt,line width=0.2mm,fill=black] at (2,0.5) {}; 
\node(bl)[circle,draw,inner sep=2pt,outer sep=1pt,line width=0.2mm,fill=black] at (-1.3,-1.8){}; 
\node(br)[circle,draw,inner sep=2pt,outer sep=1pt,line width=0.2mm,fill=black] at (1.3,-1.8){}; 

\path[stealth-stealth,draw,line width=0.2mm] (t)--(ml);
\path[stealth-stealth,draw,line width=0.2mm] (t)--(mr);
\path[stealth-stealth,draw,line width=0.2mm] (t)--(bl);
\path[stealth-stealth,draw,line width=0.2mm] (t)--(br);

\path[stealth-stealth,draw,line width=0.2mm] (ml)--(mr);
\path[stealth-stealth,draw,line width=0.2mm] (ml)--(br);
\path[stealth-stealth,draw,line width=0.2mm] (ml)--(bl);

\path[stealth-stealth,draw,line width=0.2mm] (mr)--(br);
\path[stealth-stealth,draw,line width=0.2mm] (mr)--(bl);
\path[stealth-stealth,draw,line width=0.2mm] (bl)--(br);

\node[circle,fill=darkgreen, fill opacity = 0.5, draw=black,line width=0.3mm, fit=(t) (ml) (mr) (bl)(br), inner sep=-1pt] (r1) {};
\end{scope}

\begin{scope}[yshift=-0.8in,xshift=0.7in]
\node (t)[circle,draw,inner sep=2pt,outer sep=1pt,line width=0.2mm,fill=black] {}; 
\node (b)[circle,draw,inner sep=2pt,outer sep=1pt,line width=0.2mm,fill=black,below=0.8 of t] {}; 
\node(ml)[circle,draw,inner sep=2pt,outer sep=1pt,line width=0.2mm,fill=black,below left =0.35 and 0.3 of t]{}; 
\node(mr)[circle,draw,inner sep=2pt,outer sep=1pt,line width=0.2mm,fill=black,below right=0.35 and 0.3 of t]{}; 

\path[stealth-stealth,draw,line width=0.3mm] (t)--(ml);
\path[stealth-stealth,draw,line width=0.3mm] (t)--(mr);
\path[stealth-stealth,draw,line width=0.3mm] (ml)--(mr);
\path[stealth-stealth,draw,line width=0.3mm] (b)--(mr);
\path[stealth-stealth,draw,line width=0.3mm] (b)--(ml);
\path[stealth-stealth,draw,line width=0.3mm] (b)--(t);

\node[circle, draw=black,line width=0.3mm, fit=(t) (ml) (mr) (b), inner sep=-1.8pt] (r2) {};
\end{scope}

\begin{scope}[yshift=-0.88in,xshift=-0.7in]
\node (t)[circle,draw,inner sep=2pt,outer sep=1pt,line width=0.2mm,fill=black] {}; 
\node(ml)[circle,draw,inner sep=2pt,outer sep=1pt,line width=0.2mm,fill=black,below left =0.5 and 0.3 of t]{}; 
\node(mr)[circle,draw,inner sep=2pt,outer sep=1pt,line width=0.2mm,fill=black,below right=0.5 and 0.3 of t]{}; 

\path[stealth-stealth,draw,line width=0.3mm] (t)--(ml);
\path[stealth-stealth,draw,line width=0.3mm] (t)--(mr);
\path[stealth-stealth,draw,line width=0.3mm] (ml)--(mr);

\node[circle, draw=black,line width=0.3mm, fit=(t) (ml) (mr), inner sep=1pt] (r3) {};
\end{scope}

\begin{scope}[yshift=-2.1in,xshift=-0.89in]
\node(l)[circle,draw,inner sep=2pt,outer sep=1pt,line width=0.2mm,fill=black]{}; 
\node(r)[circle,draw,inner sep=2pt,outer sep=1pt,line width=0.2mm,fill=black,right=0.7 of l]{}; 

\path[stealth-stealth,draw,line width=0.3mm] (l)--(r);

\node[circle, draw=black,line width=0.3mm, fit=(l) (r), inner sep=4.7pt] (r4) {};
\end{scope}

\path[-stealth,draw,line width=0.3mm] (r2) -- (r1);
\path[-stealth,draw,line width=0.3mm] (r3) -- (r1);
\path[-stealth,draw,line width=0.3mm] (r4) -- (r3);

\end{tikzpicture}
}
\caption{A diagram of an anchored RPF with its mutually possible classes. Each point is an outcome, and each circle is mutually possible class. The anchor class (shaded) is the maximal class in the partial order.}
\label{fig:anchored_rpf}
\end{figure}

\begin{figure}[h]
\centering
\resizebox{0.5\textwidth}{!}{
\begin{tikzpicture} %Non-Anchored
\begin{scope}[scale=0.25]
\node(t)[circle,draw,inner sep=2pt,outer sep=1pt,line width=0.2mm,fill=black] at (0,2) {}; 
\node(ml)[circle,draw,inner sep=2pt,outer sep=1pt,line width=0.2mm,fill=black] at (-2,0.5){}; 
\node(mr)[circle,draw,inner sep=2pt,outer sep=1pt,line width=0.2mm,fill=black] at (2,0.5) {}; 
\node(bl)[circle,draw,inner sep=2pt,outer sep=1pt,line width=0.2mm,fill=black] at (-1.3,-1.8){}; 
\node(br)[circle,draw,inner sep=2pt,outer sep=1pt,line width=0.2mm,fill=black] at (1.3,-1.8){}; 

\path[stealth-stealth,draw,line width=0.2mm] (t)--(ml);
\path[stealth-stealth,draw,line width=0.2mm] (t)--(mr);
\path[stealth-stealth,draw,line width=0.2mm] (t)--(bl);
\path[stealth-stealth,draw,line width=0.2mm] (t)--(br);

\path[stealth-stealth,draw,line width=0.2mm] (ml)--(mr);
\path[stealth-stealth,draw,line width=0.2mm] (ml)--(br);
\path[stealth-stealth,draw,line width=0.2mm] (ml)--(bl);

\path[stealth-stealth,draw,line width=0.2mm] (mr)--(br);
\path[stealth-stealth,draw,line width=0.2mm] (mr)--(bl);
\path[stealth-stealth,draw,line width=0.2mm] (bl)--(br);

\node[circle, draw=black,line width=0.3mm, fit=(t) (ml) (mr) (bl)(br), inner sep=-1pt] (r1) {};
\end{scope}

\begin{scope}[yshift=-0.8in,xshift=0.7in]
\node (t)[circle,draw,inner sep=2pt,outer sep=1pt,line width=0.2mm,fill=black] {}; 
\node (b)[circle,draw,inner sep=2pt,outer sep=1pt,line width=0.2mm,fill=black,below=0.8 of t] {}; 
\node(ml)[circle,draw,inner sep=2pt,outer sep=1pt,line width=0.2mm,fill=black,below left =0.35 and 0.3 of t]{}; 
\node(mr)[circle,draw,inner sep=2pt,outer sep=1pt,line width=0.2mm,fill=black,below right=0.35 and 0.3 of t]{}; 

\path[stealth-stealth,draw,line width=0.3mm] (t)--(ml);
\path[stealth-stealth,draw,line width=0.3mm] (t)--(mr);
\path[stealth-stealth,draw,line width=0.3mm] (ml)--(mr);
\path[stealth-stealth,draw,line width=0.3mm] (b)--(mr);
\path[stealth-stealth,draw,line width=0.3mm] (b)--(ml);
\path[stealth-stealth,draw,line width=0.3mm] (b)--(t);

\node[circle, draw=black,line width=0.3mm, fit=(t) (ml) (mr) (b), inner sep=-1.8pt] (r2) {};
\end{scope}

\begin{scope}[yshift=-0.88in,xshift=-0.63in]
\node (t)[circle,draw,inner sep=2pt,outer sep=1pt,line width=0.2mm,fill=black] {}; 
\node(ml)[circle,draw,inner sep=2pt,outer sep=1pt,line width=0.2mm,fill=black,below left =0.5 and 0.3 of t]{}; 
\node(mr)[circle,draw,inner sep=2pt,outer sep=1pt,line width=0.2mm,fill=black,below right=0.5 and 0.3 of t]{}; 

\path[stealth-stealth,draw,line width=0.3mm] (t)--(ml);
\path[stealth-stealth,draw,line width=0.3mm] (t)--(mr);
\path[stealth-stealth,draw,line width=0.3mm] (ml)--(mr);

\node[circle, draw=black,line width=0.3mm, fit=(t) (ml) (mr), inner sep=1pt] (r3) {};
\end{scope}

\begin{scope}[yshift=0in,xshift=0.9in]
\node(l)[circle,draw,inner sep=2pt,outer sep=1pt,line width=0.2mm,fill=black]{}; 

\node[circle, draw=black,line width=0.3mm, fit=(l), inner sep=12.8pt] (r6) {};
\end{scope}

\begin{scope}[xshift=1.5in]
\node(l)[circle,draw,inner sep=2pt,outer sep=1pt,line width=0.2mm,fill=black]{}; 
\node(r)[circle,draw,inner sep=2pt,outer sep=1pt,line width=0.2mm,fill=black,right=0.7 of l]{}; 

\path[stealth-stealth,draw,line width=0.3mm] (l)--(r);

\node[circle, draw=black,line width=0.3mm, fit=(l) (r), inner sep=4.7pt] (r4) {};
\end{scope}

\begin{scope}[yshift=-1.01in,xshift=1.7in]
\node(l)[circle,draw,inner sep=2pt,outer sep=1pt,line width=0.2mm,fill=black]{}; 

\node[circle, draw=black,line width=0.3mm, fit=(l), inner sep=12.8pt] (r5) {};
\end{scope}

\path[-stealth,draw,line width=0.3mm] (r2) -- (r1);
\path[-stealth,draw,line width=0.3mm] (r3) -- (r1);
\path[-stealth,draw,line width=0.3mm] (r2) -- (r6);
\path[-stealth,draw,line width=0.3mm] (r5) -- (r4);
\end{tikzpicture}
}
\caption{A diagram for a single RPF that is not anchored. We cannot turn this into an absolute probability function.}
\label{fig:unanchored_rpf} 
\end{figure}
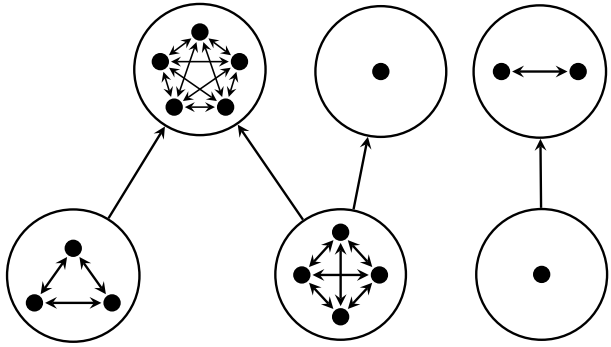

\begin{theorem}
\label{thm:equivalence_class_same}
Consider two distinct equivalence classes of mutually possible outcomes in an RPF and call them \(C_1, C_2 \subseteq \Omega\). Any pair of outcomes \(h_1 \in C_1\) and \(h_2 \in C_2\) will have the same relative probability \(P(h_1, h_2)\) as any other pair of outcomes in \(C_1 \times C_2\). 
\end{theorem}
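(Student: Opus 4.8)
The plan is to reduce the whole statement to a single composition identity and then split into cases according to the value of the cross-class relative probability. First I would record a structural observation that makes the theorem even possible: if $h_1 \in C_1$ and $h_2 \in C_2$ with $C_1 \neq C_2$, then $P(h_1, h_2) \in \{0, \infty, \ast\}$. Indeed, were $P(h_1, h_2)$ a positive finite magnitude, then by the inverse axiom $h_1$ and $h_2$ would be mutually possible, forcing $C_1 = C_2$ and contradicting distinctness. This step is essential rather than cosmetic: it is precisely the case $0 < P(h_1, h_2) < \infty$ that would break the conclusion, so it must be excluded at the outset.

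Next, fix arbitrary $h_1, h_1' \in C_1$ and $h_2, h_2' \in C_2$. I would apply the Path Composition proposition along the path $h_1' \to h_1 \to h_2 \to h_2'$ to obtain
\[P(h_1', h_2') :\cong P(h_1', h_1) \cdot P(h_1, h_2) \cdot P(h_2, h_2').\]
Because $h_1'$ and $h_1$ lie in the same mutually possible class, $P(h_1', h_1)$ is positive and finite; likewise $P(h_2, h_2')$ is positive and finite. This is the workhorse identity for the remaining argument.

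Then I would split on the value of $P(h_1, h_2)$. If $P(h_1, h_2) = 0$, the right-hand product is (positive finite)$\,\cdot\, 0 \,\cdot\,$(positive finite)$\,= 0$, a non-wildcard value; no indeterminate $0 \cdot \infty$ arises precisely because the flanking factors are finite and positive. Since the constraint is non-wildcard, Lemma \ref{wild_prop_1} collapses the matching relation to equality, giving $P(h_1', h_2') = 0 = P(h_1, h_2)$. The case $P(h_1, h_2) = \infty$ is identical with $\infty$ in place of $0$.

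The genuinely delicate case, and the main obstacle, is $P(h_1, h_2) = \ast$. Here the composition identity is useless on its own, since the product becomes $\ast$ and $P(h_1', h_2') :\cong \ast$ holds vacuously for every value; matching is simply too weak to transfer information in this direction. I would instead argue by contraposition, exploiting the symmetry of the construction: run the same identity with the primed and unprimed roles exchanged, along the path $h_1 \to h_1' \to h_2' \to h_2$. If $P(h_1', h_2')$ were a non-wildcard value, the two cases above applied in reverse would force $P(h_1, h_2)$ to equal that same non-wildcard value, contradicting $P(h_1, h_2) = \ast$. Hence $P(h_1', h_2') = \ast = P(h_1, h_2)$, and all cases agree, completing the proof.
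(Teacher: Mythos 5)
Your proof is correct, and its engine---the three-factor composition
\(P(h_1', h_2') :\cong P(h_1', h_1) \cdot P(h_1, h_2) \cdot P(h_2, h_2')\)
with positive, finite flanking factors coming from within-class mutual possibility---is exactly the identity the paper uses. Where you genuinely diverge is in what you do after that display. The paper's proof stops there, tacitly reading the matching relation as equality; that reading is licensed by Lemma \ref{wild_prop_1} only when \(P(h_1, h_2) \in \{0, \infty\}\), since then the constraint is non-wildcard. When \(P(h_1, h_2) = \ast\) the constraint collapses to \(\ast\) and the relation \(P(h_1', h_2') :\cong \ast\) holds vacuously by Lemma \ref{wild_prop_2}, so the paper's display alone does not pin down \(P(h_1', h_2')\). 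Your contrapositive step---running the same path with roles exchanged, \(h_1 \to h_1' \to h_2' \to h_2\), so that any non-wildcard value of \(P(h_1', h_2')\) would produce a non-wildcard constraint forcing \(P(h_1, h_2) \neq \ast\)---is precisely what is needed to close that case, and your observation that the positive finite flanking factors preclude an indeterminate \(0 \cdot \infty\) in the reversed product keeps the step sound. In short: same decomposition and same key identity as the paper, but your version is the more complete one, since it explicitly handles the wildcard case that the paper's one-line conclusion leaves open.
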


\begin{proof}
\(P(h_1, h_2)\) must be in \(\{0, \infty, \ast\}\) because otherwise \(h_1\) and \(h_2\) would be mutually possible and in the same equivalence class. Let \(h'_1 \in C_1\) and \(h'_2 \in C_2\) be alternative representatives from equivalences classes \(C_1\) and \(C_2\). Then \(0 < P(h'_1, h_1) < \infty\) and \(0 < P(h_2, h'_2) < \infty\) due to the definition of mutual comparability. Thus with composition we get
\[P(h'_1, h'_2) :\cong P(h'_1, h_1) \cdot P(h_1, h_2) \cdot P(h_2, h'_2) = P(h_1, h_2)\]
\end{proof}

Finally, we look at totally comparable RPFs, where mutually possible components form a \textit{total order} (see figure \ref{fig:totally_comparable_rpf}). We've established in the proof of theorem \ref{thm:equivalence_class_same} that the relative probability of outcomes from one equivalence class to another are constant and either 0, \(\infty\) or \(\ast\). If the RPF is totally comparable, then it can only be 0 or \(\infty\), providing a transitive, binary relation and thus a total order.

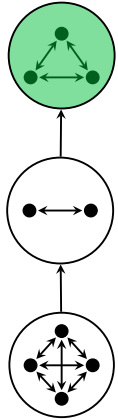
\begin{figure}[h]
\centering
\resizebox{0.1\textwidth}{!}{
\begin{tikzpicture} %Totally Comparable
\begin{scope}
\node (t)[circle,draw,inner sep=2pt,outer sep=1pt,line width=0.2mm,fill=black] {}; 
\node(ml)[circle,draw,inner sep=2pt,outer sep=1pt,line width=0.2mm,fill=black,below left =0.5 and 0.3 of t]{}; 
\node(mr)[circle,draw,inner sep=2pt,outer sep=1pt,line width=0.2mm,fill=black,below right=0.5 and 0.3 of t]{}; 

\path[stealth-stealth,draw,line width=0.3mm] (t)--(ml);
\path[stealth-stealth,draw,line width=0.3mm] (t)--(mr);
\path[stealth-stealth,draw,line width=0.3mm] (ml)--(mr);

\node[circle, fill=darkgreen, fill opacity = 0.5, draw=black,line width=0.3mm, fit=(t) (ml) (mr), inner sep=1pt] (r1) {};
\end{scope}

\begin{scope}[yshift=-1.1in,xshift=-0.2in]
\node(l)[circle,draw,inner sep=2pt,outer sep=1pt,line width=0.2mm,fill=black]{}; 
\node(r)[circle,draw,inner sep=2pt,outer sep=1pt,line width=0.2mm,fill=black,right=0.7 of l]{}; 

\path[stealth-stealth,draw,line width=0.3mm] (l)--(r);

\node[circle, draw=black,line width=0.3mm, fit=(l) (r), inner sep=4.7pt] (r2) {};
\end{scope}

\begin{scope}[yshift=-1.85in]
\node (t)[circle,draw,inner sep=2pt,outer sep=1pt,line width=0.2mm,fill=black] {}; 
\node (b)[circle,draw,inner sep=2pt,outer sep=1pt,line width=0.2mm,fill=black,below=0.8 of t] {}; 
\node(ml)[circle,draw,inner sep=2pt,outer sep=1pt,line width=0.2mm,fill=black,below left =0.35 and 0.3 of t]{}; 
\node(mr)[circle,draw,inner sep=2pt,outer sep=1pt,line width=0.2mm,fill=black,below right=0.35 and 0.3 of t]{}; 

\path[stealth-stealth,draw,line width=0.3mm] (t)--(ml);
\path[stealth-stealth,draw,line width=0.3mm] (t)--(mr);
\path[stealth-stealth,draw,line width=0.3mm] (ml)--(mr);
\path[stealth-stealth,draw,line width=0.3mm] (b)--(mr);
\path[stealth-stealth,draw,line width=0.3mm] (b)--(ml);
\path[stealth-stealth,draw,line width=0.3mm] (b)--(t);

\node[circle, draw=black,line width=0.3mm, fit=(t) (ml) (mr) (b), inner sep=-1.8pt] (r3) {};
\end{scope}

\path[-stealth,draw,line width=0.3mm] (r2) -- (r1);
\path[-stealth,draw,line width=0.3mm] (r3) -- (r2);
\end{tikzpicture}
}
\caption{A diagram of a totally comparable RPF that is not totally mutually possible. The mutually possible components form a total order, with the \textit{anchored component} on top and the graph forming a line.}
\label{fig:totally_comparable_rpf}
\end{figure}

\section{From Outcomes to Events}
\label{section:outcomes_to_events}

Our next task is to upgrade \(P\) to operate on the event level. This is more difficult than it seems. For example, we may wish to declare that the probability of event \(e_1\) with respect to \(e_2\) is going to be additive on \(e_1\) as follows:
\begin{equation}
\label{eq:incorrect_additive_event_def}
P(e_1, e_2) = \sum_{h_1 \in e_1}P(h_1, e_2)
\end{equation}

Equation \ref{eq:incorrect_additive_event_def} looks uncontroversial, but it actually contradicts the fundamental axioms! If we let \(e_1 = \varnothing\), then we have an empty sum on the right hand side of the equation, and we get \(P(\varnothing, e_2) = 0\). Likewise, if we allow \(e_2\) to be empty, we get \(P(e_1, \varnothing) = P(\varnothing, e_1)^{-1}=0^{-1}=\infty\). Both of these statements make sense until they collide in \(P(\varnothing, \varnothing) = 0 = \infty\), and what's worse is that this is also equal 1 under the identity axiom!

Another problem arises when an event is \textit{internally incomparable}, meaning that it contains outcomes that are incomparable with each other. Perhaps there are interesting things we can say about such events, but here we will constrain ourselves to totally comparable RPFs in order to avoid such questions.

\begin{definition}
\label{def:event_comparison}
Let \(P\) be a totally comparable RPF. \(P\) measures the probability of two events relative to each other using the following rules:

\begin{enumerate}[(i)]
  \item \label{event_def_1} \(P(e_1, e_2)\) obeys the fundamental axioms of relative probability.
  \item \label{event_def_2} \(P(e_1, e_2)\) sums over any reference outcome \(r\) as follows:
    \begin{equation}
      \label{eq:event_def_ratio_match}
      P(e_1, e_2) :\cong \frac{\sum_{h_1 \in e_1} P(h_1, r)}{\sum_{h_2 \in e_2} P(h_2, r)}.
    \end{equation}
\end{enumerate}
\end{definition}

Because we no longer have access to absolute probability, the best we can do is measure it relative to a \textit{reference outcome} \(r\). This ratio might be indeterminate, so we use the matching relation instead of equality. Fortunately, we can show that there exists at least one reference outcome that will constrain \(P(e_1, e_2)\) in statement \ref{eq:event_def_ratio_match} so long as \(e_1\) and \(e_2\) are not both empty.

\begin{proof} Lemma \ref{lemma:totally_comp_anchored} states that all totally comparable RPFs have anchor outcomes, and therefore (by the same argument) every event must contain \textbf{internal anchors}, defined as an anchor for the RPF that is restricted to outcomes in that event. Choose an \textit{internal anchor} \(a\) from one of the events, say \(e_1\). Then the sum \(\sum_{h_1 \in e_1} P(h_1, a)\) will be non-infinite\footnote{For anchor a, \(P(a, h) > 0\) but this also means that \(P(h, a) < \infty\).}, and non-zero because \(P(a, a) = 1\) is a term in the sum. Therefore, the constraint as a whole cannot be indeterminate.
\end{proof}

If both events are empty then they cannot have internal anchors, but by the identity axiom \(P(\varnothing, \varnothing) = 1\).

These requirements again seem reasonable, but how can we know for sure that they provide a complete and consistent definition of \(P: \mathcal{F} \times \mathcal{F} \rightarrow \mathbb{M}\)? The following must be shown:

\begin{enumerate}[(i)]
  \item \label{event_def_proof_1} If two distinct reference outcomes for \(r\) in statement \ref{eq:event_def_ratio_match} yield constraints on \(P\), then they must be equal.
  \item \label{event_def_proof_2} The constraint in statement \ref{eq:event_def_ratio_match} does not violate the fundamental axioms.
\end{enumerate}

\begin{proof}
For \ref{event_def_proof_1}:

Let \(r_1\) and \(r_2\) be distinct reference outcomes that constrain \(P(e_1, e_2)\). Then we want to check that

\begin{equation}
\label{eq:relative_event_unique}
\frac{\sum_{h_1 \in e_1} P(h_1, r_1)}{\sum_{h_2 \in e_2} P(h_2, r_1)} = \frac{\sum_{h_1 \in e_1} P(h_1, r_2)}{\sum_{h_2 \in e_2} P(h_2, r_2)}
\end{equation}

Neither expression is a wildcard, and none of the individual terms are either. The key to this argument is in the value of \(P(r_1, r_2)\).

Assume \(P(r_1, r_2) = 0\). 

If \(\sum_{h_1 \in e_1} P(h_1, r_1)\) is finite, then \(\sum_{h_1 \in e_1} P(h_1, r_2)\) must be matched by \(\sum_{h_1 \in e_1} P(h_1, r_1) \cdot P(r_1, r_2)\) which is 0. By the same argument, \(\sum_{h_2 \in e_2} P(h_2, r_2)\) is also 0. Since they can't both be zero, one of the sums involving \(r_1\) on the right hand side is infinite. Therefore \(P(e_1, e_2)\) is either \(\infty\) or 0. Let's say it is \(P(e_1, e_2) = 0\). Then \(\sum_{h_2 \in e_2} P(h_2, r_1) = \infty\) and by the argument above \(\sum_{h_1 \in e_1} P(h_1, r_2) = 0\). Because the right hand side is not \(\ast\) - it must resolve to zero as well.

By analogous arguments, equation \ref{eq:relative_event_unique} must also hold when \(P(e_1, e_2) = \infty\) or \(P(r_1, r_2) = \infty\) or both.

Next assume that \(P(r_1, r_2) \notin \{0, \infty\} \). Multiply the left hand side of equation \ref{eq:relative_event_unique} by \(1 = \frac{P(r_1, r_2)}{P(r_1, r_2)}\) and distribute to get:

\[\frac{\sum_{h_1 \in e_1} P(h_1, r_1) \cdot P(r_1, r_2)}{\sum_{h_2 \in e_2} P(h_2, r_1) \cdot P(r_1, r_2)} = \frac{\sum_{h_1 \in e_1} P(h_1, r_2)}{\sum_{h_2 \in e_2} P(h_2, r_2)}\]

For \ref{event_def_proof_2}:

The identity, inverse, and composition axioms follow from the fact that statement \ref{eq:event_def_ratio_match} is a ratio with identical expressions for \(e_1\) in the numerator and \(e_2\) in the denominator. Therefore, if it resolves it is just a ratio of positive numbers, which follow the fundamental axioms.
\end{proof}

\begin{theorem}
If events \(e_1\) and \(e_2\) are not both empty, the following formula calculates the relative probability of events:
\[P(e_1, e_2) = \sum_{h_1 \in e_1} \frac{1}{\sum_{h_2 \in e_2} P(h_2, h_1)}.\]
\end{theorem}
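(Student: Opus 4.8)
The plan is to prove the formula by recognizing that it is an \emph{additivity} statement over the first argument, and then to verify that additivity by a careful choice of reference outcome. The first move is to notice that each summand on the right is exactly the singleton comparison $P(\{h_1\}, e_2)$: applying Definition \ref{def:event_comparison} to $\{h_1\}$ against $e_2$ with the reference outcome $r = h_1$ gives $P(\{h_1\}, e_2) :\cong \frac{P(h_1,h_1)}{\sum_{h_2 \in e_2} P(h_2, h_1)} = \frac{1}{\sum_{h_2 \in e_2} P(h_2, h_1)}$, and this reference is legitimate because the numerator $P(h_1,h_1)=1$ is positive and finite, so the ratio never resolves to $\ast$. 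It therefore suffices to establish $P(e_1, e_2) = \sum_{h_1 \in e_1} P(\{h_1\}, e_2)$ whenever $e_1, e_2$ are not both empty.

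Before the main argument I would dispose of the degenerate cases. If $e_1 = \varnothing$ the claimed sum is empty and equals $0$, matching $P(\varnothing, e_2) = 0$; if $e_2 = \varnothing$ each summand is $\frac{1}{0} = \infty$, matching $P(e_1, \varnothing) = P(\varnothing, e_1)^{-1} = \infty$. The hypothesis excludes precisely the both-empty case, which is the $0 = \infty$ collision flagged after equation \ref{eq:incorrect_additive_event_def}.

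For the main case, with both events non-empty, the crux is to pick one reference that keeps the common denominator positive and finite. Since $P$ is totally comparable, Lemma \ref{lemma:totally_comp_anchored} applied to the RPF restricted to $e_2$ supplies an internal anchor $a \in e_2$, so that $P(h_2, a) < \infty$ for every $h_2 \in e_2$ and the sum $D := \sum_{h_2 \in e_2} P(h_2, a)$ lies in $(0, \infty)$ (it contains the term $P(a,a) = 1$). With this reference, constraint \ref{eq:event_def_ratio_match} reads $P(e_1, e_2) :\cong \frac{\sum_{h_1 \in e_1} P(h_1, a)}{D}$, and because $D$ is positive and finite I may distribute the division over the sum to get $P(e_1, e_2) :\cong \sum_{h_1 \in e_1} \frac{P(h_1, a)}{D}$. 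The same reference $a$ also evaluates each singleton, $P(\{h_1\}, e_2) :\cong \frac{P(h_1, a)}{D}$, so the reference-independence proved above (statement \ref{event_def_proof_1}) identifies $\frac{P(h_1,a)}{D}$ with the value $\frac{1}{\sum_{h_2 \in e_2} P(h_2, h_1)}$ obtained from reference $h_1$. Summing over $h_1 \in e_1$ yields the formula, and since every quantity involved is a genuine magnitude (a reciprocal of a sum of non-$\ast$ terms), the matching relations collapse to equalities by Lemma \ref{wild_prop_1}.

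The main obstacle is exactly this choice of reference. The naive additive definition \ref{eq:incorrect_additive_event_def} fails because the denominator can vanish; insisting that $a$ be an internal anchor of the \emph{second} event $e_2$ (rather than of $e_1$, as in the existence argument) is what forces $D \in (0,\infty)$ and thereby licenses both the distribution step and the reference change. Everything else is bookkeeping in $\mathbb{M}^*$, with the one danger being the indeterminate product $0 \cdot \infty$; keeping $D$ strictly positive and finite throughout ensures it never arises.
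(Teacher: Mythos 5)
Your proof is correct, and it takes a recognizably different route from the paper's. The paper's proof is a single computation: multiply each summand by \(1 = \frac{P(h_1, r)}{P(h_1, r)}\), collapse the inner denominator via the composition axiom so that the proposed formula matches the defining constraint \(\frac{\sum_{h_1 \in e_1} P(h_1, r)}{\sum_{h_2 \in e_2} P(h_2, r)}\), and conclude equality because both sides match a common non-wildcard constraint ``for appropriate reference \(r\)'' --- with the reference left unspecified, and with the hazard that \(\frac{P(h_1,r)}{P(h_1,r)} = \ast\) when \(P(h_1,r) \in \{0, \infty\}\) absorbed silently by the matching relation. You instead decompose through singleton events: you identify each summand as \(P(\{h_1\}, e_2)\) via the reference \(r = h_1\) (sound, since the numerator \(P(h_1,h_1)=1\) rules out both \(\frac{0}{0}\) and \(\frac{\infty}{\infty}\)), and you prove the additivity statement \(P(e_1,e_2) = \sum_{h_1 \in e_1} P(\{h_1\}, e_2)\) by re-expressing every singleton at one common reference, invoking the already-established uniqueness of resolving constraints (statement \ref{event_def_proof_1}) instead of composing inside the sum. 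The substantive gains are exactly where \(\mathbb{M}^*\) arithmetic can fail: you name the reference precisely --- an internal anchor \(a\) of \(e_2\), from Lemma \ref{lemma:totally_comp_anchored} applied to the restriction --- noting that anchoring in \(e_2\) (not \(e_1\), as in the paper's existence argument after Definition \ref{def:event_comparison}) is what forces \(D = \sum_{h_2 \in e_2} P(h_2,a) \in (0,\infty)\) and licenses the term-by-term division that the naive definition in equation \ref{eq:incorrect_additive_event_def} fatally lacks; and you handle the empty-event cases explicitly, which the paper's ``suitable reference'' phrasing quietly skips (your \(e_1 = \varnothing\) case is in effect Theorem \ref{thm:empty_event_impossible}, proved later by the same computation, so strictly you should cite its argument rather than its statement). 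What the paper's route buys is brevity; what yours buys is rigor at the \(0 \cdot \infty\) pressure points and an explicit additivity formulation that would transfer more directly to the infinite-space generalization sketched in Section \ref{section:future_work}.
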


\begin{proof}
Find a suitable reference outcome $r$ and multiply by \(1 = \frac{P(h_1, r)}{P(h_1, r)}\).
\[\sum_{h_1 \in e_1} \frac{1}{\sum_{h_2 \in e_2} p(h_2, h_1)} :\cong \sum_{h_1 \in e_1} \frac{P(h_1, r)}{\sum_{h_2 \in e_2} P(h_2, h_1) P(h_1, r)} = \frac{\sum_{h_1 \in e_1} P(h_1, r)}{\sum_{h_2 \in e_2} P(h_2, r)}\]

Since both \(P(e_1, e_2)\) and \(\sum_{h_1 \in e_1} \frac{1}{\sum_{h_2 \in e_2} p(h_2, h_1)}\) match  \(\frac{\sum_{h_1 \in e_1} P(h_1, r)}{\sum_{h_2 \in e_2} P(h_2, r)}\) which is not \(\ast\) for appropriate reference r, they must be equal.
\end{proof}

We then derive the absolute probability function as
\[P(e) = P(e, \Omega) = \sum_{h \in e} \frac{1}{\sum_{h' \in \Omega}p(h', h)}\]

\begin{theorem}
\label{thm:empty_event_impossible}
The empty event \(\varnothing\) has probability 0 relative to any non-empty event.
\end{theorem}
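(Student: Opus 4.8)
The plan is to read the result straight off the explicit event-probability formula established in the theorem immediately preceding this one, which applies precisely because $\varnothing$ and the non-empty event $e_2$ are not both empty. Setting $e_1 = \varnothing$, the outer summation $\sum_{h_1 \in e_1}$ ranges over the empty index set and is therefore an empty sum, equal to $0$. Since the value of each summand is irrelevant once there are no summands to add, this immediately yields $P(\varnothing, e_2) = 0$, independent of the internal structure of $e_2$.

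To be careful about well-definedness, I would also confirm the same value directly from Definition \ref{def:event_comparison}(ii). There the defining constraint reads $P(\varnothing, e_2) :\cong \frac{\sum_{h_1 \in \varnothing} P(h_1, r)}{\sum_{h_2 \in e_2} P(h_2, r)}$, whose numerator is again the empty sum $0$. The only thing that could spoil the argument is a denominator that is itself $0$ or $\infty$, which would turn the ratio into the indeterminate $\ast$ and prevent the match from collapsing to an equality. I would rule this out by choosing the reference outcome $r$ to be an \emph{internal anchor} of $e_2$, which exists because $e_2$ is a non-empty sub-RPF of a totally comparable RPF (by Lemma \ref{lemma:totally_comp_anchored} together with the internal-anchor argument given just after Definition \ref{def:event_comparison}). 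For such an $r$ the denominator is finite, since $P(h_2, r) < \infty$ for every $h_2$ when $r$ is an anchor, and strictly positive, since it contains the term $P(r, r) = 1$. Hence the ratio resolves to $0$ divided by a finite positive quantity, which equals $0$ and is not $\ast$, so the matching relation is in fact an equality.

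The main obstacle here is bookkeeping rather than depth: the entire content of the statement is that an empty outer sum contributes $0$, and the only genuine subtlety is guaranteeing that the defining constraint actually resolves instead of collapsing to the wildcard. That subtlety is fully dispatched by the guaranteed existence of an internal anchor in the non-empty event $e_2$, so I expect no step harder than selecting this reference outcome.
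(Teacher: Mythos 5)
Your proposal is correct and, in its second half, is essentially the paper's own proof: the paper applies Definition \ref{def:event_comparison}(ii) with an arbitrary \(h \in e\) as the reference outcome, observes the numerator is the empty sum \(0\) and that the denominator cannot be \(0\) since it contains the term \(P(h,h) = 1\), and concludes \(P(\varnothing, e) = 0\). The only remark worth making is that your internal-anchor machinery is more than is needed (and your claim that an infinite denominator would give \(\ast\) is slightly off): since the numerator is \(0\), only a zero denominator is dangerous, because \(\frac{0}{\infty} = 0 \cdot \infty^{-1} = 0 \cdot 0 = 0\) in \(\mathbb{M}^{\ast}\), so any outcome of the non-empty event serves as reference without invoking Lemma \ref{lemma:totally_comp_anchored}.
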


\begin{proof}
Let \(e\) be a non-empty event, and let \(h\) be an outcome in \(e\).

\[P(\varnothing, e) :\cong \frac{\sum_{h_1 \in \varnothing} P(h_1, h)}{\sum_{h_2 \in e} P(h_2, h)} = \frac{0}{\sum_{h_2 \in e} P(h_2, h)}\]

The sum \(\sum_{h_2 \in e} P(h_2, h)\) cannot itself be zero because \(P(h, h)\) is one of its terms. Therefore, \(P(\varnothing, e) = 0\)
\end{proof}

\section{Composing Relative Probability Functions}

Let \(P_0, P_1, ..., P_{K-1}\) be relative probability functions on outcome spaces \(\Omega_0, \Omega_1, ..., \Omega_{K-1}\) respectively so that each \(P_k\) is a function of type \(\Omega_k \times \Omega_k \rightarrow \mathbb{M}^{\ast}\).

We can combine all of these relative probability functions together with a top level probability function \(P_\top\) (pronounced ``P-Top'') with outcome space \(\Omega_\top = \{\Omega_0, \Omega_1, ... \Omega_{K- 1}\}\). The outcome space is hierarchical as shown in figure \ref{fig:compose_rpfs}.

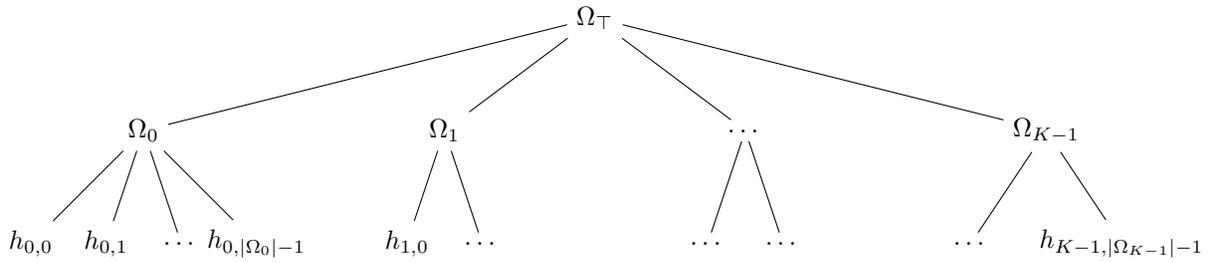
\begin{figure}[h]
\begin{tikzpicture}
\node {\(\Omega_\top\)} [sibling distance = 4cm]
  child {node {\(\Omega_0\)}  [sibling distance = 1cm]
    child {node {\(h_{0, 0}\)}}
    child {node {\(h_{0, 1}\)}}
    child {node {\dots}}
    child {node {\(h_{0, |\Omega_0| - 1}\)}}
  }
  child {node {\(\Omega_1\)}  [sibling distance = 1cm]
    child {node {\(h_{1, 0}\)}}
    child {node {\dots}}  
  }
  child {node {\dots}  [sibling distance = 1cm]
    child {node {\dots}}
    child {node {\dots}}
  }
  child {node {\(\Omega_{K-1}\)}   [sibling distance = 2cm]
    child {node {\dots}}
    child {node {\(h_{K-1, |\Omega_{K-1}| - 1}\)}}
  };
\end{tikzpicture}
\caption{A tree diagram for a set of RPFs being composed by a top-level RPF.}
\label{fig:compose_rpfs}
\end{figure}

Now let \(\Omega\) be the set of all outcomes \(\Omega_0 \cup \Omega_1 \cup \dots \Omega_{K-1}\). We can create a new relative probability function \(P\) acting on outcome space \(\Omega\) with the following rules:

\begin{itemize}
\item If the two outcomes fall under the same component, then their relative probabilities do not change:

\begin{equation}
\label{rpf_composition_same_branch}
P(h_{k, i}, h_{k, j}) = P_k(h_{k, i}, h_{k, j})
\end{equation}

\item If the two outcomes fall under different components, then their relative probabilities are given as follows.

\begin{equation}
\label{eq:rpf_composition_different_branch}
P(h_{k_1, i}, h_{k_2, j}) = P_{k_1}(h_{k_1, i}, \Omega_{k_1}) \cdot  P_{\top}(\Omega_{k_1}, \Omega_{k_2}) \cdot P_{k_2}(\Omega_{k_2}, h_{k_2, j})
\end{equation}
\end{itemize}

Note the use of composition to traverse up and down the tree. One could of course imagine this tree being many levels, and having a different height for each branch.

\begin{theorem}
Composed function \(P\) as defined above respects the fundamental axioms.
\end{theorem}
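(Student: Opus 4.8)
The plan is to verify the three fundamental axioms of Definition \ref{def:fundamental_laws} in turn for the composed function $P$, treating identity and inverse as warm-ups and reserving the bulk of the effort for composition. Throughout I rely on the fact that every building block — each component RPF $P_k$ (including its event-level extension from Definition \ref{def:event_comparison}) and the top-level function $P_\top$ — already satisfies all three axioms, and on the fact that multiplication on $\mathbb{M}^\ast$ is commutative and associative, with $m \cdot m^{-1} = 1$ whenever $0 < m < \infty$ and $m \cdot m^{-1} = \ast$ when $m \in \{0, \infty\}$.

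For the identity axiom, any single outcome $h_{k,i}$ lies in one component, so Equation \ref{rpf_composition_same_branch} gives $P(h_{k,i}, h_{k,i}) = P_k(h_{k,i}, h_{k,i}) = 1$. For the inverse axiom I would split into the same-component and different-component cases. The first is immediate from the inverse axiom of $P_k$. For the second I write out $P(h_{k_2,j}, h_{k_1,i})$ via Equation \ref{eq:rpf_composition_different_branch}, invert each of its three factors using the inverse axioms of $P_{k_2}$, $P_\top$, and $P_{k_1}$, and reassemble (using commutativity) to recover the expression for $P(h_{k_1,i}, h_{k_2,j})$.

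Composition $P(h_1, h_3) :\cong P(h_1, h_2) \cdot P(h_2, h_3)$ is the heart of the proof, and I would organize it as a case analysis according to how $h_1, h_2, h_3$ distribute among components: all three in one component; two together and one apart (three sub-cases, by which outcome is separated); and all three in distinct components. In each case I expand the right-hand factors with Equations \ref{rpf_composition_same_branch} and \ref{eq:rpf_composition_different_branch}, then regroup the product so that it telescopes: the shared endpoints produce round-trip factors of the form $P_k(\Omega_k, h_2) \cdot P_k(h_2, \Omega_k)$, or adjacent $P_\top$ traversals such as $P_\top(\Omega_{k_1}, \Omega_{k_2}) \cdot P_\top(\Omega_{k_2}, \Omega_{k_3})$, to which I apply the composition axiom of the relevant building block and collapse down to the formula for $P(h_1, h_3)$.

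The main obstacle is that these round-trip cancellations need not equal $1$: when the intermediate outcome $h_2$ is not mutually possible with its own component space $\Omega_k$, the factor $P_k(\Omega_k, h_2) \cdot P_k(h_2, \Omega_k)$ is $0 \cdot \infty = \ast$ rather than $1$ (and analogously for a $P_\top$ round trip through an impossible component). The resolution is exactly the asymmetry of the matching relation: since the product $P(h_1, h_2) \cdot P(h_2, h_3)$ sits on the \emph{constraint} side of $:\cong$, whenever a round-trip factor degenerates to $\ast$ the whole constraint becomes $\ast$ and the matching statement holds vacuously by Lemma \ref{wild_prop_2}. When instead every round-trip factor is a finite positive magnitude, it collapses to $1$, the telescoping goes through cleanly, and the surviving factors reassemble — via the composition axioms of $P_k$ and $P_\top$ — into precisely $P(h_1, h_3)$. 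Checking that these two regimes exhaust each case, and that the clean regime really reproduces $P(h_1, h_3)$ rather than merely being matched by it, is the one place that demands genuine care.
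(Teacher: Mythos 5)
Your proposal follows essentially the same route as the paper's proof: identity via the same-component rule (Equation \ref{rpf_composition_same_branch}), inverse by inverting the three factors of Equation \ref{eq:rpf_composition_different_branch} and reassembling, and composition by telescoping through the round-trip factor \(P_{k_2}(\Omega_{k_2}, h_2)\cdot P_{k_2}(h_2, \Omega_{k_2})\) and the adjacent \(P_\top\) traversals, with degenerate \(0\cdot\infty=\ast\) round trips absorbed harmlessly on the constraint side of \(:\cong\). Your explicit case analysis over how the three outcomes distribute among components is slightly more thorough than the paper, which writes out only the all-distinct-components case, but it is the same argument.
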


\begin{proof}
The axioms are already obeyed by each component internally, but they also need to hold when the two inputs to \(P\) are in \textbf{different} components. An outcome is always in the same component as itself, so the identity axiom immediately follows from equation \ref{rpf_composition_same_branch}.

The inverse law can be proven by calculation.
\begin{equation}
\begin{aligned}
P(h_{k_1, i}, h_{k_2, j})^{-1} &= (P_{k_1}(h_{k_1, i}, \Omega_{k_1}) \cdot  P_{\top}(\Omega_{k_1}, \Omega_{k_2}) \cdot P_{k_2}(\Omega_{k_2}, h_{k_2, j}))^{-1} \\
& = P_{k_1}(h_{k_1, i}, \Omega_{k_1})^{-1} \cdot  P_{\top}(\Omega_{k_1}, \Omega_{k_2})^{-1} \cdot P_{k_2}(\Omega_{k_2}, h_{k_2, j})^{-1} \\
& = P_{k_1}(\Omega_{k_1}, h_{k_1, i}) \cdot  P_{\top}(\Omega_{k_2}, \Omega_{k_1}) \cdot P_{k_2}(h_{k_2, j}, \Omega_{k_2}) \\
& = P_{k_2}(h_{k_2, j}, \Omega_{k_2})\cdot  P_{\top}(\Omega_{k_2}, \Omega_{k_1}) \cdot P_{k_1}(\Omega_{k_1}, h_{k_1, i}) \\
& = P(h_{k_2, j}, h_{k_1, i})
\end{aligned}
\end{equation}

Composition can be shown similarly - now naming the 3 separate indices in components \(k_1, k_2, k_3\) as \(i_1, i_2, i_3\) respectively.
\begin{equation}
\begin{aligned}
& P(h_{k_1, i_1}, h_{k_2, i_2}) \cdot P(h_{k_2, i_2}, h_{k_3, i_3}) \\
& = P_{k_1}(h_{k_1, i_1}, \Omega_{k_1}) \cdot  P_{\top}(\Omega_{k_1}, \Omega_{k_2}) \cdot \textcolor{red}{P_{k_2}(\Omega_{k_2}, h_{k_2, i_2}) \cdot  P_{k_2}(h_{k_2, i_2}, \Omega_{k_2})} \cdot  P_{\top}(\Omega_{k_2}, \Omega_{k_3}) \cdot P_{k_3}(\Omega_{k_3}, h_{k_3, i_3}) \\
& :\cong P_{k_1}(h_{k_1, i_1}, \Omega_{k_1}) \cdot  \textcolor{red}{P_{\top}(\Omega_{k_1}, \Omega_{k_2}) \cdot  P_{\top}(\Omega_{k_2}, \Omega_{k_3})} \cdot P_{k_3}(\Omega_{k_3}, h_{k_3, i_3}) \\
& :\cong P_{k_1}(h_{k_1, i_1}, \Omega_{k_1}) \cdot  P_{\top}(\Omega_{k_1}, \Omega_{k_3}) \cdot P_{k_3}(\Omega_{k_3}, h_{k_3, i_3}) \\
& :\cong P_{k_1}(h_{k_1, i_1}, h_{k_3, i_3})
\end{aligned}
\end{equation}
\end{proof}

\begin{theorem}
\(P\) is totally comparable if and only if the following are true:

\begin{enumerate}[(i)]
\item \label{comp_recs_1} \(P_{\top}\) and \(P_k\) for all \(k \in \{0, 1, ..., K - 1\}\) are totally comparable.
\item \label{comp_recs_2} All components except at most one are totally mutually possible.
\item \label{comp_recs_3} If there is a component that is not totally mutually possible, then every element of \(P_{\top}\) is possible with respect to that component.
\end{enumerate}
\end{theorem}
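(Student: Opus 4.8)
The plan is to prove both directions by reducing total comparability of the composed $P$ to a single question: when does the cross-component formula, the threefold product in equation \ref{eq:rpf_composition_different_branch}, avoid the wildcard? Within a single component $P$ agrees with $P_k$ by equation \ref{rpf_composition_same_branch}, so same-component comparability is equivalent to each $P_k$ being totally comparable; all the real content lives in the cross-component case.

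The key bridging observation I would establish first is how the outer factors $P_{k_1}(h_{k_1, i}, \Omega_{k_1})$ and $P_{k_2}(\Omega_{k_2}, h_{k_2, j})$ behave. Using the within-component absolute probability $P_k(h) = P_k(h, \Omega_k)$ from section \ref{section:outcomes_to_events} together with the anchor characterization, I would show that, for a totally comparable component, $P_k(h, \Omega_k)$ is positive and finite when $h$ is an anchor of $\Omega_k$ and equals $0$ precisely when $h$ is a non-anchor; dually $P_k(\Omega_k, h)$ is finite and positive for anchors and $\infty$ for non-anchors. Consequently a component is totally mutually possible exactly when none of its outcomes contributes a $0$ as a left factor or an $\infty$ as a right factor. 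Since $\ast \cdot m = \ast$ and $0 \cdot \infty = \ast$, the product in equation \ref{eq:rpf_composition_different_branch} is a wildcard if and only if some factor is already $\ast$ or a $0$ meets an $\infty$.

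For the ($\Leftarrow$) direction I would assume (i)--(iii) and run a short case analysis on two components $k_1 \neq k_2$. Condition (i) removes every $\ast$ from the three factors. If both components are totally mutually possible, the outer factors are finite and positive, so no $0 \cdot \infty$ collision can occur whatever the middle factor $P_\top(\Omega_{k_1}, \Omega_{k_2})$ is. By (ii) the only remaining case has exactly one non-totally-mutually-possible component; there the sole possible $0$ (from a non-anchor on the left) or $\infty$ (from a non-anchor on the right) sits next to the middle factor, and condition (iii), read through the inverse axiom as $P_\top(\Omega_{k_0}, \Omega_{k'}) < \infty$ and $P_\top(\Omega_{k'}, \Omega_{k_0}) > 0$, is exactly what forbids the middle factor from supplying the matching $\infty$ or $0$. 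Hence the product resolves and $P$ is totally comparable.

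For the ($\Rightarrow$) direction I would assume $P$ totally comparable and extract the three conditions by contradiction. Condition (i) for each $P_k$ is immediate from equation \ref{rpf_composition_same_branch}, and for $P_\top$ it follows because a wildcard middle factor would force the entire product to be $\ast$. For (ii), if two distinct components were both non-totally-mutually-possible I would pick a non-anchor in each, producing a left factor $0$ and a right factor $\infty$ and hence a $0 \cdot \infty = \ast$ comparison, contradicting total comparability. For (iii), if some $\Omega_{k'}$ failed to be possible with respect to the unique non-totally-mutually-possible component $\Omega_{k_0}$, then $P_\top(\Omega_{k_0}, \Omega_{k'}) = \infty$, and pairing this with a non-anchor of $\Omega_{k_0}$ again yields a $0 \cdot \infty$ wildcard. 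The main obstacle is the bridging lemma of the second paragraph: correctly translating the internal anchor structure of each component into the zero and infinite behavior of the outer factors, and then tracking the directions of ``possible with respect to'' and the inverse axiom so that condition (iii) lines up with precisely the collision that must be avoided.
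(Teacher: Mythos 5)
Your proposal is correct and follows essentially the same route as the paper: reduce total comparability to the threefold cross-component product in equation \ref{eq:rpf_composition_different_branch}, show the outer factors are never \(\ast\) or \(\infty\) (zero exactly at non-anchors, via the reference-outcome sum containing \(P(a,a)=1\)), observe that the product is wildcard only when a \(0\) meets an \(\infty\), and verify that conditions (ii) and (iii) are exactly what block such a collision, with the converse obtained by exhibiting a collision whenever a condition fails. Your explicit bridging lemma about anchor versus non-anchor behavior of \(P_k(h,\Omega_k)\) and \(P_k(\Omega_k,h)\) is merely a cleaner packaging of the computation the paper performs inline, so the two arguments coincide in substance.
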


\begin{proof}
First assume the conditions to show that the total comparability of \(P\) follows.

If all the components are totally comparable, then any two outcomes in the same component are always going to be comparable in the overall RPF. We only need to prove that outcomes in \textbf{different} components are comparable. Starting with equation \ref{eq:rpf_composition_different_branch},

\begin{equation}
P(h_{k_1, i}, h_{k_2, j}) = P_{k_1}(h_{k_1, i}, \Omega_{k_1}) \cdot  P_{\top}(\Omega_{k_1}, \Omega_{k_2}) \cdot P_{k_2}(\Omega_{k_2}, h_{k_2, j})
\end{equation}

The only way that \(P(h_{k_1, i}, h_{k_2, j}) = \ast\) is if both \(0\) and \(\infty\) are factors of the constraint \(P_{k_1}(h_{k_1, i}, \Omega_{k_1}) \cdot  P_{\top}(\Omega_{k_1}, \Omega_{k_2}) \cdot P_{k_2}(\Omega_{k_2}, h_{k_2, j})\)

Because there is at most one component that is not totally mutually possible by \ref{comp_recs_2}, we can say that either \(P_{k_1}(h_{k_1, i}, \Omega_{k_1}) = 0\) or \(P_{k_2}(h_{k_2, j}, \Omega_{k_2}) = 0\), or possibly neither, but not both.

Using equation \ref{eq:event_def_ratio_match} with the factor \(P_{k_1}(h_{k_1, i}, \Omega_{k_1})\) and \(k_1\) itself as the reference outcome, we get
\[
P_{k_1}(h_{k_1, i}, \Omega_{k_1}) :\cong \frac{\sum_{h_1 \in \{k_1\}} P(h_1, k_1)}{\sum_{h \in \Omega_{k_1}} P(h_2, k_1)} = \frac{1}{\sum_{h \in \Omega_{k_1}} P(h_2, k_1)}.
\]

The sum in the denominator cannot be zero since \(P(k_1, k_1) = 1\) will be one of its terms. Therefore the factors \(P_{k_1}(h_{k_1, i}, \Omega_{k_1})\) and \(P_{k_2}(h_{k_2, j}, \Omega_{k_2})\) also must be finite.

If the term \(P_{k_1}(h_{k_1, i}, \Omega_{k_1}) = 0\), then the only way the entire right hand side can be \(\ast\) is if \(P_{\top}(\Omega_{k_1}, \Omega_{k_2}) = \infty\). But this can't be true because by \ref{comp_recs_3} we assumed that \(\Omega_{k_2}\) is possible with respect to \(\Omega_{k_1}\), the sole component with impossible outcomes. An analogous argument can be made if \(P_{k_2}(h_{k_2, j}, \Omega_{k_2}) = 0\).

Therefore, the right hand side of the equation is not \(\ast\) and \(P\) is totally comparable.

In the opposite direction, we show that if any of the conditions are broken, then \(P\) is not totally comparable. Breaking condition \ref{comp_recs_1} would introduce an explicit \(\ast\) into equation \ref{eq:rpf_composition_different_branch}. If there are multiple components with impossible outcomes breaking condition \ref{comp_recs_2}, then it would introduce a \(0\) into the first term of equation \ref{eq:rpf_composition_different_branch} and an \(\infty\) into the third term, yielding \(\ast\).

And finally, if only condition \ref{comp_recs_3} is broken, it would (with appropriate choices of \(h_1\) and \(h_2\)) introduce a 0 into the first term of equation \ref{eq:rpf_composition_different_branch} and an \(\infty\) into the \textbf{second} term of equation \ref{eq:rpf_composition_different_branch}.

Therefore, if any of these conditions are broken, \(P\) is \textbf{not} totally comparable.
\end{proof}

\section{Bayesian Inference on Relative Distributions}

Bayesian inference is the practice of updating ones beliefs about the state of the world given new data.

A relative probability function represents a belief over the set of potential hypotheses in \(\Omega\), meaning that Bayesian inference can be performed on RPFs. The initial belief is called the \textit{prior} and the updated belief is called the \textit{posterior}. Once a formula for the posterior is is worked out, the practitioner might then want to search the hypothesis space \(\Omega\) for an outcome that is either most likely (\textit{maximum a posteriori}) or very good with respect to the posterior distribution. They might also wish to randomly sample one or more outcomes \(h \in \Omega\) from the posterior weighted according to the relative probabilities.

Almost all of these sampling and search methods rely on an algorithm that starts at one or more initial hypotheses and iterately updates using the relative probability of a current outcome and its nearby outcomes. Examples include hill climbing, simulated annealing, Newton-Raphson, Markov Chain Monte Carlo, and the No U-Turn sampler. The role of these algorithms in supervised machine learning has been previously discussed by Local Maximum Labs in ``Sampling Bias Correction for Supervised Machine Learning\cite{sklar_bias}''. Because relative probability simplifies Bayes rule and is ideal for many sampling and selection algorithms, statisticians and engineers should consider using the RPF framework for these purposes.

Start with the Bayesian inference formula for conditional probability for \(h \in \Omega\) assuming that we receive data \(D\).

\[P(h|D) = \frac{P(D|h) \cdot P(h)}{P(D)} \qquad P(D) = \sum_{h \in \Omega} P(D|h) \cdot P(h)\]

Now we convert to relative probability by looking at two hypotheses \(h_1\) and \(h_2\) and the ratio of their posterior probabilities.

\[\frac{P(h_1|D)}{P(h_2| D)} = \frac{P(D|h_1) \cdot P(h_1)}{P(D)} \div \frac{P(D|h_2) \cdot P(h_2)}{P(D)} = \frac{P(D|h_1) \cdot P(h_1)}{P(D|h_2) \cdot P(h_2)} \]

Notice that each component is now represented by a ratio. By making the appropriate substitutions, we can express this entirely in terms of RPFs.

For the ratio of prior probabilities, substitute the relative prior: \(\frac{P(h_1)}{P(h_2)} \rightarrow P(h_1, h_2) \)

For the ratio of posterior probabilities, substitute the relative posterior: \(\frac{P(h_1|D)}{P(h_2|D)} \rightarrow P(h_1, h_2|D) \)

The liklihood ratio, also known as the \textit{bayes factor}, may not look like a relative probability, but the formula to expand conditional probability suggests that it is:

\[\frac{P(D|h_1)}{P(D|h_2)} = \frac{\frac{P(D \cap h_1)}{P(D)}}{\frac{P(D \cap h_2)}{P(D)}} = \frac{P(D \cap h_1)}{P(D \cap h_2)} \]

Let \(P_D\) represent the likelihood ratio of the different hypotheses. The likelihood ratio \(P_D(h_1, h_2)\) encodes a description of how the different hypotheses rate the likelihood of data.

The substitution for the likelihood ratio is now as follows: \(\frac{P(D|h_1)}{P(D|h_2)} \rightarrow P_D(h_1, h_2) \)

These substitutions create a Bayes rule for relative probability:

\begin{equation}
P(h_1, h_2|D) = P_D(h_1, h_2) P(h_1, h_2)
\end{equation}
 
Bayesian inference is now reduced to a term-by-term multiplication of two different RPFs: \(P_D(h_1, h_2)\) and \(P(h_1, h_2)\). Fortunately, the product of two RPFs also obey the fundamental axioms.

\begin{theorem} 
Let \(P_1\) and \(P_2\) be relative probability functions on \(\Omega\). Define \(P(h_1, h_2) = P_1(h_1, h_2) \cdot P_2(h_1, h_2)\). Then, \(P\) is also an RPF because it obeys the fundamental axioms.
\end{theorem}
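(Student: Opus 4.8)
The plan is to verify each of the three fundamental axioms in turn for the product function $P$, reducing every case to the corresponding axiom for $P_1$ and $P_2$ together with the arithmetic of $\mathbb{M}^*$. The identity axiom is immediate: since $P_1$ and $P_2$ are RPFs, $P(h,h) = P_1(h,h)\cdot P_2(h,h) = 1\cdot 1 = 1$.

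For the inverse axiom I would start from $P(h_2,h_1)^{-1} = (P_1(h_2,h_1)\cdot P_2(h_2,h_1))^{-1}$, push the inverse through the product to obtain $P_1(h_2,h_1)^{-1}\cdot P_2(h_2,h_1)^{-1}$, and then apply the inverse axiom of each factor to get $P_1(h_1,h_2)\cdot P_2(h_1,h_2) = P(h_1,h_2)$. The one fact that must be established here is that inversion distributes over multiplication in $\mathbb{M}^*$, namely $(xy)^{-1}=x^{-1}y^{-1}$, including the boundary inputs; a short case analysis over $\{0,\infty,\ast\}$ (using $0^{-1}=\infty$, $\infty^{-1}=0$, and the natural convention $\ast^{-1}=\ast$) settles this.

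The composition axiom carries the real content. Beginning with $P(h_1,h_2)\cdot P(h_2,h_3) = P_1(h_1,h_2)P_2(h_1,h_2)\cdot P_1(h_2,h_3)P_2(h_2,h_3)$, I would use commutativity and associativity of multiplication in $\mathbb{M}^*$ to regroup this as $\left[P_1(h_1,h_2)P_1(h_2,h_3)\right]\cdot\left[P_2(h_1,h_2)P_2(h_2,h_3)\right]$. The composition axiom applied separately to $P_1$ and $P_2$ gives $P_1(h_1,h_3):\cong P_1(h_1,h_2)P_1(h_2,h_3)$ and $P_2(h_1,h_3):\cong P_2(h_1,h_2)P_2(h_2,h_3)$, so to conclude that $P(h_1,h_3)=P_1(h_1,h_3)P_2(h_1,h_3)$ matches the regrouped product I need an auxiliary lemma that matching is preserved under multiplication: if $a:\cong a'$ and $b:\cong b'$, then $ab:\cong a'b'$. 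This follows in three short cases: if neither constraint is $\ast$ both matches are equalities and the products are literally equal; and if either constraint is $\ast$, then its product with anything is $\ast$ (by $\ast\cdot m=\ast$) so the match holds by Lemma \ref{wild_prop_2}.

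I expect the main obstacle to be not any single axiom but the bookkeeping around $\mathbb{M}^*$: I must confirm that multiplication on $[0,\infty]\cup\{\ast\}$ remains commutative and associative even through the indeterminate product $0\cdot\infty=\ast$, that inversion distributes across products, and that matching is a congruence for multiplication. Each of these is a finite case check, but they are exactly where a subtlety could hide, so I would dispatch them as small preliminary observations before invoking them in the three verifications, after which the theorem follows mechanically.
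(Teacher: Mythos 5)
Your proposal is correct and takes essentially the same route as the paper's proof: verify each axiom directly for the product, pushing the inverse through the factors and regrouping the composition product so that the component axioms for $P_1$ and $P_2$ apply. The only difference is one of rigor, not approach --- you explicitly isolate the $\mathbb{M}^{\ast}$ arithmetic facts (commutativity/associativity through $0\cdot\infty=\ast$, distribution of inversion over products with the convention $\ast^{-1}=\ast$, and the lemma that matching is a congruence for multiplication), all of which the paper's proof uses silently; your case checks for these are valid, so the proofs coincide in substance.
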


\begin{proof}
Identity: \(P(h_1, h_1) = P_1(h_1, h_1) P_2(h_1, h_1)=1 \cdot 1=1\)
 
Inverse: \[P(h_1, h_2) = P_1(h_1, h_2) \cdot P_2(h_1, h_2)=P_1(h_2, h_1)^{-1} \cdot P_2(h_2, h_1)^{-1}=(P_1(h_2, h_1) \cdot P_2(h_2, h_1))^{-1}=P(h_2, h_1)^{-1}\]
 
Composition:
\[P(h_1, h_2)P(h_2, h_3)=P_1(h_1, h_2) P_2(h_1, h_2)P_1(h_2, h_3) P_2(h_2, h_3) :\cong P_1(h_1, h_3) P_2(h_1, h_3)=P(h_1, h_3)\]
\end{proof}

\subsection{Degeneration of Mutually Possible Beliefs}

The \textit{Cromwell rule} in Bayesian inference cautions against setting the probability of outcomes equal to 0 in a prior belief. For relative probability, this corresponds to a preference for totally mutually possible RPFs. If an outcome were considered impossible with respect to another, this would be a permanent belief that cannot be changed through evidence.

\begin{theorem}
If an outcome is impossible with respect to another outcome in the posterior distribution, it will either remain impossible or become incomparable in the posterior. In other words,  if \(P(h_1, h_2)=0\), then \(P(h_1, h_2|D) \in \{0, \ast\}\).
\end{theorem}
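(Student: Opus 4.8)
The plan is to reduce the statement entirely to the relative Bayes rule \(P(h_1, h_2|D) = P_D(h_1, h_2) \cdot P(h_1, h_2)\) established immediately above, and from there to a single computation in the magnitude-wildcard space \(\mathbb{M}^*\). There is no need to unpack the definitions of \(P_D\) or the conditioning on \(D\); all of that structure has already been folded into the product form of Bayes rule, so the theorem is really a statement about how \(0\) behaves under multiplication in \(\mathbb{M}^*\).

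First I would substitute the hypothesis \(P(h_1, h_2) = 0\) into the Bayes rule, which gives \(P(h_1, h_2|D) = P_D(h_1, h_2) \cdot 0\). Since \(P_D\) is itself an RPF, the likelihood ratio \(P_D(h_1, h_2)\) is some element of \(\mathbb{M}^* = [0, \infty] \cup \{\ast\}\), so the whole problem collapses to evaluating the product \(m \cdot 0\) as \(m\) ranges over that set.

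I would then finish with a case analysis using the multiplication rules from the Preliminaries. When \(m\) is any finite magnitude (including \(0\) itself), ordinary arithmetic gives \(m \cdot 0 = 0\). When \(m = \infty\), the defining property \(0 \cdot \infty = \ast\) yields the wildcard. When \(m = \ast\), the rule \(\ast \cdot m = \ast\) again yields the wildcard. In every case the product lands in \(\{0, \ast\}\), which is exactly the claim.

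The hard part will not be the mathematics, which is immediate, but making the case analysis genuinely exhaustive: the temptation is to declare that \(0\) absorbs everything and conclude the posterior is always \(0\), whereas the single exceptional product \(0 \cdot \infty\) is precisely what opens the door to indeterminacy. I would therefore stress that \(0\) is an absorbing element of multiplication on \(\mathbb{M}^*\) \emph{except} at \(\infty\) and \(\ast\). The interpretive payoff, rather than any technical obstacle, is the real content of the theorem: a comparison that begins impossible can never be argued back to genuine possibility by data, but it can degenerate into the indeterminate wildcard when the likelihood ratio is infinite or itself unknown.
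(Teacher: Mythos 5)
Your proposal is correct and follows essentially the same route as the paper's proof: substitute \(P(h_1, h_2) = 0\) into the relative Bayes rule and observe that \(P_D(h_1, h_2) \cdot 0\) equals \(0\) unless \(P_D(h_1, h_2) \in \{\infty, \ast\}\), in which case it is \(\ast\). Your version merely spells out the case analysis on \(\mathbb{M}^*\) more explicitly, which the paper compresses into a single sentence.
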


\begin{proof}
Assume \(P(h_1, h_2) = 0\). Then \(P(h_1, h_2|D) = P_D(h_1, h_2) P(h_1, h_2) = P_D(h_1, h_2) \cdot 0\). This final term would normally simplify to 0, but will be \(\ast\) if \(P_D(h_1, h_2) \in \{\infty, \ast\}\).
\end{proof}

If a Bayesian setup allows an outcome to be declared impossible with respect to another from one piece of evidence, and the reverse from another piece of evidence, those outcomes will become permanently incomparable. This failure mode will be familiar to those machine learning practitioners who have ever received a model full of \textit{NaNs}.

\begin{theorem}
If two outcomes are incomparable in a prior distribution, they will be incomparable in the posterior distribution. In other words, if \(P(h_1, h_2)=\ast\), then \(P(h_1, h_2|D) = \ast\).
\end{theorem}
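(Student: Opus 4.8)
The plan is to derive the result directly from the relative Bayes rule established immediately above, $P(h_1, h_2|D) = P_D(h_1, h_2) \cdot P(h_1, h_2)$, combined with the absorbing behavior of the wildcard element under multiplication, $\ast \cdot m = \ast$, introduced in the section on the wildcard element.

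First I would take the hypothesis $P(h_1, h_2) = \ast$ and substitute it into the relative Bayes rule, so that the posterior relative probability becomes $P(h_1, h_2|D) = P_D(h_1, h_2) \cdot \ast$. At this point the entire argument reduces to evaluating a single product in $\mathbb{M}^*$ whose second factor is the wildcard, and there is nothing left to compute about the likelihood ratio itself.

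Next I would invoke the multiplicative absorption property of $\ast$. The key point to emphasize is that this property was defined precisely to allow multiplication by \emph{any} magnitude-wildcard value, so it applies no matter what $P_D(h_1, h_2)$ happens to be — whether an ordinary magnitude, $0$, $\infty$, or even $\ast$ itself (since $\ast \cdot \ast = \ast$). Hence $P_D(h_1, h_2) \cdot \ast = \ast$, and therefore $P(h_1, h_2|D) = \ast$, which is exactly the claim.

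The only thing to be careful about — rather than a genuine obstacle — is the exhaustiveness of this absorption step: I must make sure the wildcard rule covers every possible value of the likelihood factor $P_D(h_1, h_2)$, including the degenerate cases $0$ and $\infty$ and the wildcard case, so that no value of the evidence can ever restore a comparison that is already lost in the prior. Because the wildcard multiplication rule is total by design, all cases are subsumed and the proof is essentially immediate; conceptually, it formalizes the intuition that incomparability is an irrecoverable loss of information that no amount of data can undo.
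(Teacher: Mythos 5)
Your proposal is correct and matches the paper's own proof, which likewise substitutes \(P(h_1, h_2) = \ast\) into the relative Bayes rule and concludes \(P_D(h_1, h_2) \cdot \ast = \ast\) by the wildcard absorption property. Your additional care in noting that the rule \(\ast \cdot m = \ast\) is total over \(\mathbb{M}^*\) (covering \(0\), \(\infty\), and \(\ast\) itself) is a slightly more explicit rendering of the same one-line argument.
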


\begin{proof}
Assume \(P(h_1, h_2) = \ast\). Then \(P(h_1, h_2|D) = P_D(h_1, h_2) P(h_1, h_2) = P_D(h_1, h_2) \cdot \ast = \ast\)
\end{proof}

\subsection{Example: A Noisy Channel}

Here is an example of how relative probability gives us an interesting way of looking at statistical inference problems.

Suppose we are to receive a message in the outcome space \(\Omega = \{0, 1, ..., K-1\}\). There is a probability of \(p\) that the message goes through correctly. Otherwise, it gets scrambled and we receive a value in \(\Omega\) drawn from the uniform distribution\footnote{We could still have gotten lucky and received the correct value.}. We receive the same message several times for redundancy, and we count \(c_k\) as the number of times the message was received as \(k\).

The indicator function can be used to get the absolute probability of receiving \(h_1\) given that the real message was \(h_2\).

\[P(\text{received}\: h_1 | \text{message}\: h_2) = p[h_1 = h_2] + \frac{1-p}{K}\]
 
We then use this to construct an RPF for the likelihood ratio if we receive a single message, \(k\).
\[P_k(h_1, h_2) = \frac{p[h_1 = k] + \frac{1-p}{K}}{p[h_2 = k] + \frac{1-p}{K}} = \frac{pK[h_1 = k] + 1-p}{pK[h_2 = k] + 1-p}\]

If we receive multiple messages in the count vector \(c\), we get the following likelihood formula:
\[P_c(h_1, h_2) = \prod_{k \in \Omega}\left(\frac{pK[h_1 = k] + 1-p}{pK[h_2 = k] + 1-p}\right)^{c_k}\]

Every factor of the product where \(k \notin \{h_1, h_2\}\) will be 1 and have no effect. Therefore we can remove all those terms from the product.
\begin{equation}
\begin{aligned}
P_c(h_1, h_2) &= \left(\frac{pK[h_1 = h_1] + 1-p}{pK[h_2 = h_1] + 1-p}\right)^{c_{h_1}} \left(\frac{pK[h_1 = h_2] + 1-p}{pK[h_2 = h_2] + 1-p}\right)^{c_{h_2}} \\
& = \left(\frac{pK + 1-p}{1-p}\right)^{c_{h_1}} \left(\frac{1-p}{pK + 1-p}\right)^{c_{h_2}} = \left(1 + \frac{pK}{1-p}\right)^{c_{h_1} - c_{h_2}}
\end{aligned}
\end{equation}

Because the prior is uniform, the posterior is just equal to the likelihood.
\[P(h_1, h_2 | c) = P_c(h_1, h_2) \cdot P(h_1, h_2) = \left(1 + \frac{pK}{1-p}\right)^{c_{h_1} - c_{h_2}} \]

We now have an insight: the relative probability between two hypotheses is exponential on the difference between their counts. Formulating these problems in terms of relative probability often leads to easily interpretable results, even before converting into absolute probability (which may or may not be required). Using a different prior would be as easy as appending an additional term to the formula for \(P(h_1, h_2|c)\).

\subsection{Digital Representation}

Even if an inference problem starts with a mutually possible prior, the posterior could end up as any RPF including all types in figure \ref{fig:flow_chart}. How can we represent an RPF digitally in a data structure that could account for all of these various possibilities?

Mutual possibility classes from section \ref{section:possibility_classes} provide a good framework for organizing an RPF. Every outcome is a member of a mutual possibility class, and has a relative probability within that class.

To that end, we need to maintain a mapping from outcomes to classes, and a partial order of all the mutual possibility classes. The data structure for this purpose could be as simple as listing the outbound edges for each class (illustrated by the graphs in figures \ref{fig:anchored_rpf}, \ref{fig:unanchored_rpf}, and \ref{fig:totally_comparable_rpf}). There needs to be a method to compare two classes, and the comparison will return one of four values: greater, lesser, equal, or incomparable - or in numerical terms \(\{\infty, 0, 1, \ast\}\).

Each comparison might require a traversal of the mutual possibility graph which could get expensive in certain situations. The choice of data structure for partial orders comes with tradeoffs. For small outcome spaces graph traversal will usually be adequate. When Bayesian inference is performed, the mutual possibility classes will sometimes need to be split up - either vertically as some items in a class become impossible with respect to others - or horizontally as they become incomparable. The data structure should be able to account for this as well.

Let \(\Omega\) be the outcome space, and let \(C\) be the set of mutually possible classes on \(\Omega\) with respect to the RPF. We maintain three functions 

\begin{itemize}
\item \(\alpha: \Omega \rightarrow C\) assigns outcomes to possibility classes.
\item \(\ell: \Omega \rightarrow (-\infty, \infty)\) provides a log value for an outcome within its mutual possibility class. This allows us to use the full range of floating point numbers available on our machine.
\item \(Q: C \times C \rightarrow \{\infty, 0, 1, \ast\}\) compares two probability classes.
\end{itemize}

From these, we can compute the value of an RPF with the following formula:
\[P(h_1, h_2) = Q(\alpha(h_1), \alpha(h_2)) \cdot e^{\ell(h_1) - \ell(h_2)}\]

The simplest RPF to represent in this form is the uniform RPF. In this case, there is a single possibility class so \(C = \{0\}\), and \(\ell\) is constant so it can be set as \(\ell(h) = 0\). No data is needed for Q because it has an identity rule where \(Q(0, 0) = 1\). In fact, Q is itself an RPF on C where instead of providing values in \(\mathbb{M}^{\ast}\) it provides values in the subspace \(\{\infty, 0, 1, \ast\}\).

\section{Topology and Limits in Relative Probability Space}
\label{section:topology}

One of the benefits of relative probability spaces is their properties with respect to limits. This section will show that limits of totally comparable RPFs are also totally comparable RPFs. In other words, the space of RPFs is topologically closed.

This effort caps off a significant argument in favor of relative probability. RPFs hold on to certain pieces information under the limit operation, while absolute probability does not. Some background in topology\footnote{Mendelson (1990) \cite{mendelson} and Bradley et al. (2020) \cite{bradley} the formal definitions and theorems which we will work off of here.} required for this section.

\subsection{Relative Probability Spaces}

\begin{definition}
\(\text{RPF}^{\ast}(K)\) is the set of relative probability functions of size K (where \(\Omega = \{0, 1, ..., K - 1\}\)). Likewise \(\text{RPF}(K)\) is the set of all totally comparable RPFs of size K.
\end{definition}

Because the set of absolute distributions with \(|\Omega| = K\) is embedded in \(\mathbb{R}^K\) as seen in figure \ref{fig:simplex}, its topological properties are well understood. The simplex is closed, bounded, and compact.

For relative probability distributions, there is no obvious way to embed it into K-dimensional euclidean space\footnote{Though it should be possible! See section \ref{section:euclidean_embedding}.}. The relative probability space is more complicated, because at the corners and edges of the simplex lurk entire subspaces where zero-probability outcomes are still being compared in different configurations.

Fortunately, \(\text{RPF}(K)\) can still be embedded into a much larger euclidean space. Any \(P \in \text{RPF}(K)\) is a function of type \(\Omega \times \Omega \rightarrow \mathbb{M}\) that satisfies the fundamental properties. Therefore \(\text{RPF}(K)\) can at least be embedded into \(\mathbb{M}^{K^2}\).

A topology can be defined by a \textit{basis of open sets}, and for euclidean space (and metric spaces more generally) the basis is the set of open balls on \(\mathbb{R}^n\).

\begin{definition}
An open ball of size \(\epsilon\) around point \(x \in \mathbb{R}^n\) is the set of all points y such that \(|x - y| < \epsilon\).
\end{definition}

Open balls do not work as a basis on the set \(\mathbb{M}\) because an open ball around \(\infty\) would contain only \(\infty\) when we instead want them to also contain some interval \((x, \infty]\). This is remedied by using the following transformation between \(\mathbb{M}\) and \([0, 1]\). We take this transformation to be continuous, or topology conserving, so that the topology of \(\mathbb{M}\) is defined by the topology of \([0, 1]\).

\begin{definition}
The \textit{inverse odds transform} is the function \(\text{odds}^{-1}: \mathbb{M} \rightarrow [0, 1]\) with \(\text{odds}^{-1}(0) = 0\) and \(\text{odds}^{-1}(\infty) = 1\) defined by
\[\text{odds}^{-1}(x) = \frac{x}{x + 1}\]
\end{definition}

The inverse odds transformation establishes \(\mathbb{M}\) as topologically equivalent to the closed interval \([0, 1]\). Now \(\text{RPF}(K)\) can be embedded into a bounded region of euclidean space, namely \([0, 1]^{K^2}\) by applying inverse odds to it. The result won't follow the fundamental axioms, but it will respect all of the topological properties. Appendix \ref{appedix:topology} provides an alternative way to define a topology on \(\text{RPF}(K)\).

\subsection{Limit Points and Compactness}

We now present an argument for the closure of \(\text{RPF}(K)\) under limits.

\begin{definition}
\(x\) is a \textit{limit point} of set \(A\) if any open set containing \(x\) also contains points in \(A\). Equivalently for euclidean space, any open ball containing \(x\) also intersects with \(A\).
\end{definition}

\begin{theorem}
\label{thm:rpf_closed}
Let \(P\) be a \textit{limit point} of \(\text{RPF}(K)\). Then \(P\) satisfies all of the fundamental axioms and is therefore a member of \(\text{RPF}(K)\). In other words, \(\text{RPF}(K)\) is topologically closed.
\end{theorem}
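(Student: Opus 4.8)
The plan is to realize $P$ as the coordinatewise limit of a sequence $(P_n)_{n\ge 1}$ drawn from $\text{RPF}(K)$ and then to verify the three fundamental axioms one at a time by passing to the limit. Since $\text{RPF}(K)$ is embedded in the metrizable cube $[0,1]^{K^2}$ via the inverse odds transform (a homeomorphism $\mathbb{M}\to[0,1]$), every limit point is the limit of a sequence, and convergence in this finite product topology is exactly coordinatewise convergence $P_n(h_i,h_j)\to P(h_i,h_j)$ in $\mathbb{M}$. Because the ambient cube contains no wildcard coordinate, the limit $P$ is automatically $\mathbb{M}$-valued; hence $P$ never equals $\ast$, and \textbf{total comparability comes for free}. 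The whole problem therefore reduces to checking that the axioms survive the limit.

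The identity and inverse axioms are routine. For identity, $P_n(h,h)=1$ for every $n$, so the constant sequence forces $P(h,h)=1$. For the inverse axiom, I would observe that inversion $x\mapsto x^{-1}$ on $\mathbb{M}$ (with $0^{-1}=\infty$, $\infty^{-1}=0$) corresponds under the inverse odds transform to the map $y\mapsto 1-y$ on $[0,1]$, which is a homeomorphism; inversion is therefore continuous on all of $\mathbb{M}$, and passing to the limit in $P_n(h_1,h_2)=P_n(h_2,h_1)^{-1}$ yields $P(h_1,h_2)=P(h_2,h_1)^{-1}$.

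The composition axiom is the crux, and the obstacle is precisely that multiplication $\mathbb{M}\times\mathbb{M}\to\mathbb{M}^{\ast}$ is discontinuous at the indeterminate pairs $(0,\infty)$ and $(\infty,0)$, where it produces $\ast$. I would split on the value of the constraint $c:=P(h_1,h_2)\cdot P(h_2,h_3)$. If $c=\ast$, then $P(h_1,h_3):\cong\ast$ holds trivially, since $\ast$ matches everything. If $c\neq\ast$, the limiting pair $(P(h_1,h_2),P(h_2,h_3))$ lies outside the closed two-point set $\{(0,\infty),(\infty,0)\}$, so it has an open neighborhood disjoint from that set; for all large $n$ the pair $(P_n(h_1,h_2),P_n(h_2,h_3))$ sits in this neighborhood, its product is not $\ast$, and the composition axiom for $P_n$ together with Lemma \ref{wild_prop_1} upgrades matching to genuine equality $P_n(h_1,h_3)=P_n(h_1,h_2)\cdot P_n(h_2,h_3)$. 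Since multiplication is continuous at every point off the two bad pairs, I pass to the limit on both sides to conclude $P(h_1,h_3)=c$, which in particular gives the desired matching.

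Finally, having verified all three axioms and already observed that $P$ is totally comparable, I conclude $P\in\text{RPF}(K)$; as $\text{RPF}(K)$ thus contains all of its limit points, it is topologically closed. The only delicate point is the composition step, and the case split on whether the constraint degenerates to $\ast$ is what isolates and disarms the single discontinuity of the multiplication map.
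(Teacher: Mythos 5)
Your proof is correct and takes essentially the same approach as the paper: both use the embedding of \(\text{RPF}(K)\) into \([0,1]^{K^2}\) via the inverse odds transform and both hinge on the fact that multiplication on \(\mathbb{M}\) is discontinuous only at the pairs \((0,\infty)\) and \((\infty,0)\), with the paper simply running the argument contrapositively (an axiom violation at \(P\) persists throughout a sufficiently small \(\epsilon\)-ball, so \(P\) cannot be a limit point) where you pass to a sequential limit directly, which is equivalent since the ambient cube is metrizable. If anything, your writeup is more careful at the two points the paper glosses: you explicitly justify, via closedness of the bad set \(\{(0,\infty),(\infty,0)\}\), that for nearby functions the constraint \(P_n(h_1,h_2)\cdot P_n(h_2,h_3)\) cannot degenerate to \(\ast\) (so matching upgrades to equality and limits can be taken), and you note that the limit is automatically \(\mathbb{M}\)-valued, hence totally comparable --- a condition required for membership in \(\text{RPF}(K)\) as defined but left implicit in the paper's proof.
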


\begin{proof}
Let \(P\) be a limit point of \(\text{RPF}(K)\). We can show that for each fundamental axiom, if \(P\) doesn't satisfy the axiom then some open ball around \(P\) also doesn't satisfy the axiom.

Identity: If \(P\) doesn't satisfy the identity axiom, then \(P(h, h) \neq 1\) for some outcome \(h\). The inverse odds transform maps the value \(1\) to \(\frac{1}{2}\), so a choice of \(\epsilon\) less than \(|\text{odds}^{-1}(P(h, h)) - \frac{1}{2}|\) will contain only elements \(P'\) where \(P'(h, h) \neq 1\).

Inverse: A similar argument applies here if for some pair \(h_1\) and \(h_2\), \(P(h_1, h_2) \neq P(h_1, h_2)\). An \(\epsilon\) can be selected that is small enough so that \(P(h_1, h_2)\) is never equal to \(P(h_1, h_2)\)

Composition: Suppose \(P(a, c) :\ncong P(a, b) \cdot P(b, c) \). This can only be true if \(P(a, c) \neq P(a, b) \cdot P(b, c) \). But then this gap allows us to make the same argument from before; for some sufficiently small \(\epsilon\)-ball around \(P\), the composition axiom will still be false.

This means that only functions \(P\) which satisfy the fundamental axioms can be limit points to \(\text{RPF}(K)\), and therefore \(\text{RPF}(K)\) contains all its limit points and is closed.
\end{proof}

We can now prove that \(\text{RPF}(K)\) is compact thanks to the Heine-Borel theorem, stated as follows (wording from Bradley \cite{bradley}).

\begin{theorem}[Heine-Borel Theorem]
A subset of \(\mathbb{R}^n\) is compact if and only if it is closed and bounded.
\end{theorem}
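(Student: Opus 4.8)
The plan is to prove both directions of the equivalence separately, working in $\mathbb{R}^n$ with the Euclidean norm and taking compactness to mean the open-cover definition: every family of open sets whose union contains $S$ admits a finite subfamily that still covers $S$.

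First I would dispatch the easy direction, that a compact $S \subseteq \mathbb{R}^n$ is closed and bounded. For boundedness, cover $S$ by the nested open balls $\{B(0, m)\}_{m \in \mathbb{N}}$; a finite subcover forces $S \subseteq B(0, M)$ for the largest index $M$. For closedness I would show the complement is open: fixing $x \notin S$, each $y \in S$ can be separated from $x$ by disjoint balls (using that $\mathbb{R}^n$ is Hausdorff), the collection of $y$-balls covers $S$, and a finite subcover yields finitely many matching $x$-balls whose intersection is an open neighborhood of $x$ disjoint from $S$.

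The harder direction is that closed and bounded implies compact, and this is where the real content lies. Boundedness gives $S \subseteq [-M, M]^n$ for some $M$, so it suffices to prove (a) the closed box $[-M, M]^n$ is compact, and (b) a closed subset of a compact set is compact. Step (b) is routine: adjoin the open set $\mathbb{R}^n \setminus S$ to any open cover of $S$, extract a finite subcover of the box from the enlarged cover, and then discard the adjoined set to recover a finite subcover of $S$.

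The main obstacle is step (a), the compactness of the closed box itself. I would prove it by a bisection argument: assuming some open cover $\mathcal{U}$ of $[-M,M]^n$ admits no finite subcover, I would repeatedly subdivide the box into $2^n$ congruent subboxes, at each stage selecting one that still lacks a finite subcover. This produces a nested sequence of closed boxes with diameters tending to $0$, whose intersection, by completeness of $\mathbb{R}^n$, is a single point $p$. Since $p$ lies in some $U \in \mathcal{U}$ and $U$ is open, $U$ contains every sufficiently small box in the sequence, contradicting the assumption that those boxes have no finite subcover. An alternative I would keep in reserve is to prove $[0,1]$ compact directly from the least-upper-bound property and then assemble the box via a finite-product (Tychonoff) argument, but the bisection route is the most self-contained and is the one I would write up.
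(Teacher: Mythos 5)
Your proposal is correct, but it differs from the paper in the most basic possible way: the paper offers no proof of this statement at all. The theorem is quoted verbatim (with attribution to Bradley et al.\ \cite{bradley}) as classical background, and is used purely as a black box to pass from the closedness of \(\text{RPF}(K)\) (Theorem \ref{thm:rpf_closed}) and its boundedness under the inverse odds embedding to the compactness of \(\text{RPF}(K)\). What you have written is the standard self-contained textbook proof, and each of its steps is sound: the easy direction via the nested-ball cover for boundedness and Hausdorff separation for closedness; the reduction of the hard direction to (a) compactness of the closed box \([-M,M]^n\) and (b) the lemma that a closed subset of a compact set is compact, with (b) handled by adjoining the open complement to the cover; and (a) handled by \(2^n\)-fold bisection, where the nested boxes with diameters tending to zero intersect in a single point \(p\) by completeness, and any open \(U \ni p\) from the cover swallows a sufficiently small box in the chain, giving a one-element subcover that contradicts the selection. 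Your held-in-reserve alternative (compactness of \([0,1]\) from the least-upper-bound property plus a finite-product argument) is likewise standard. The trade-off is one of scope rather than correctness: the paper's citation keeps attention on its actual contribution --- verifying that \(\text{RPF}(K)\) is closed and bounded so that Heine--Borel applies --- whereas your proof makes the compactness conclusion rest on first principles at the cost of reproving a result the paper deliberately imports; nothing in your argument conflicts with how the paper deploys the theorem.
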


\begin{theorem}
\(\text{RPF}(K)\) is compact.
\end{theorem}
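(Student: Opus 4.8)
The plan is to invoke the Heine-Borel theorem directly, since by this point the two hypotheses it requires have already been assembled. First I would recall that the inverse odds transform, applied coordinatewise, embeds $\text{RPF}(K)$ into the cube $[0,1]^{K^2} \subseteq \mathbb{R}^{K^2}$. Because the image lies entirely inside $[0,1]^{K^2}$, it is automatically bounded: every coordinate is trapped in $[0,1]$, so the whole image sits inside a ball of radius $\sqrt{K^2} = K$ about the origin. This dispatches the boundedness hypothesis with essentially no work.

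Second, I would cite Theorem \ref{thm:rpf_closed}, which already establishes that $\text{RPF}(K)$ is topologically closed; its proof shows that any function failing a fundamental axiom has an entire open ball around it that also fails, so $\text{RPF}(K)$ contains all of its limit points. Combining closedness with boundedness, Heine-Borel immediately yields that the embedded image is compact.

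The one point requiring care — and the only real obstacle — is that Heine-Borel speaks about subsets of $\mathbb{R}^n$, whereas $\text{RPF}(K)$ is a priori a set of functions of type $\Omega \times \Omega \to \mathbb{M}$, not literally a subset of euclidean space. I would close this gap by noting that compactness is a purely topological property preserved under homeomorphism, and that the coordinatewise inverse odds transform $\mathbb{M}^{K^2} \to [0,1]^{K^2}$ is a homeomorphism (since $\text{odds}^{-1} \colon \mathbb{M} \to [0,1]$ was declared topology-conserving). Hence compactness of the euclidean image transfers back to $\text{RPF}(K)$ itself, completing the argument.
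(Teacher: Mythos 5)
Your proposal is correct and follows essentially the same route as the paper: closedness from Theorem \ref{thm:rpf_closed}, boundedness via the inverse odds embedding into \([0,1]^{K^2}\), and then Heine--Borel. Your extra remark that compactness transfers back through the homeomorphism is a careful touch the paper leaves implicit, but it does not change the argument.
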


\begin{proof}
Theorem \ref{thm:rpf_closed} states that \(\text{RPF}(K)\) is closed, and by the odds transform it is also bounded. By the Heine-Borel theorem, it is compact.
\end{proof}

\section{Future Work}
\label{section:future_work}
\subsection{Expansions to infinite spaces}
The obvious extension to this work is to expand relative probability to a generalized space which may be infinite, and thus capture all of the variety of probability distributions that one might wish to define. Section \ref{section:outcomes_to_events} provides a good framework for this which would start by modifying statement \ref{eq:event_def_ratio_match} to ask for an additive property. The relative probability function would then become a \textit{relative probability distribution}.

This process raises certain questions.
\begin{enumerate}
\item Relative probability would provide an interesting basis for the theory of finitely additive probability measures, or charges\cite{charges}. If we do not require countable additivity and instead opt for the weaker finite additivity, this would allow for constructions like a fair countable lottery (also known as a De Finetti Lottery\cite{de_finetti}). In the interest of studying limits of RPFs, this would be desirable. Finite additivity may also be a better mathematical model for computational methods in probability, on which additions must always be finite.
\item If we derive a notion of probability density, then can these densities at a particular pair of events be used to compare the relative probability of those events? What specific properties of the relative probability distribution are required to make this work? Can we start with a density function and work upwards?
\end{enumerate}

It appears possible to use these ideas to create a unified version of the Hausdorff measure - which finds the size of an object given its dimension. Instead of considering it to be multiple measures, we can have a single measure where bounded sets of equal dimension are mutually possible, and smaller-dimensional objects are mutually impossible with respect to larger dimensional objects.

\subsection{Relationship to Category Theory}

Category theorists will recognize that an RPF describes a \textit{thin category} where any pair of objects have at most one morphism connecting them (per direction). The relative probability axioms can be analyzed and approached through the lens of category theory in order to learn more about them.

\begin{center}
\resizebox{0.3\textwidth}{!}{
\begin{tikzpicture} %Commute Diagram
[node distance=0.8in]
\tikzset{
  shift left/.style ={commutative diagrams/shift left={#1}},
  shift right/.style={commutative diagrams/shift right={#1}},
  com/.style={circle,draw=none,inner sep=1pt,font=\LARGE}
}
\node (C) [com] {\(C\)};
\node (B) [com, above right = of C] {\(B\)};
\node (A) [com, above left = of C] {\(A\)};

\path[-stealth,thick,shift right=0.7ex] (C) edge node[right] {6} (A);
\path[-stealth,thick,shift right=0.7ex] (C) edge node[right] {3} (B);
\path[-stealth,thick,shift right=0.7ex] (B) edge node[above] {2} (A);

\path[-stealth,thick,shift right=0.7ex] (A) edge node[left,xshift=-0.6ex] {\(\frac{1}{6}\)} (C);
\path[-stealth,thick,shift right=0.7ex] (B) edge node[left,yshift=0.6ex] {\(\frac{1}{3}\)} (C);
\path[-stealth,thick,shift right=0.7ex] (A) edge node[below] {\(\frac{1}{2}\)} (B);

\path[<-,every loop/.style={looseness=5},thick] (A)
         edge  [in=170,out=90,loop,below] node {1} (); 
\path[<-,every loop/.style={looseness=5},thick] (B)
         edge  [in=80,out=0,loop,below] node {1} (); 
\path[<-,every loop/.style={looseness=9},thick,rotate=180] (C)
         edge  [in=-235,out=-315,loop,above] node {1} ();

%\path[-stealth,thick] (A) edge [loop above] node {1} ();
\end{tikzpicture}
}
\end{center}

The recent work of Censi et al.\cite{censi} concerns negative information in categories, which corresponds to the wildcard element \(\ast\). It represents regions of the probability function that remain incomparable. This work could be used to subsume, develop, and refine the indeterminate wildcard concept.

\subsection{Embedding in Lower Dimensional Euclidean Space}
\label{section:euclidean_embedding}

Absolute probability functions have this advantage where they can be embedded into a simplex in \(\mathbb{R}^K\). For relative probability functions it is not so straightforward but it should still be possible given that \(\text{RPF}(K)\) is a (K-1)-dimensional set that can be embedded into \([0, 1]^{K^2}\). For example, the space \(\text{RPF}(3)\) can be mapped as a hexagon, where each point can be assigned a probability based on its distance between two parallel sides as in figure \ref{fig:hexagon}.

\begin{figure}[h]
\begin{center}
\resizebox{0.25\textwidth}{!}{
\begin{tikzpicture}
\node[regular polygon,draw,minimum size=4cm,line width=0.5mm,regular polygon sides = 6] (p) at (0,0) {};

\node at (p.corner 1) [anchor=360/6*(1-1)+270] {};
\node at (p.corner 2) [anchor=360/6*(2-1)+270] {};
\node at (p.corner 3) [anchor=360/6*(3-1)+270] {};
\node at (p.corner 4) [anchor=360/6*(4-1)+270] {};
\node at (p.corner 5) [anchor=360/6*(5-1)+270] {};
\node at (p.corner 6) [anchor=360/6*(6-1)+270] {};

\node[regular polygon,draw=none,rotate=30,minimum size=3.4cm,line width=0.5mm,regular polygon sides = 6] (A) at (0,0) {};
\node at (A.corner 1) [anchor=360/6*(1-1)+270] {\(P(a)=1\)};
\node at (A.corner 2) [anchor=360/6*(2-1)+230,rotate=58] {\(P(c)=0\)};
\node at (A.corner 3) [anchor=360/6*(4-1)+260,rotate=-60] {\(P(b)=1\)};
\node at (A.corner 4) [anchor=360/6*(4-1)+270] {\(P(a)=0\)};
\node at (A.corner 5) [anchor=360/6*(5-1)+230,rotate=55] {\(P(c)=1\)};
\node at (A.corner 6) [anchor=360/6*(7-1)+260,rotate=-60] {\(P(b)=0\)};

\draw[fill=black] (0.5,0.6) circle (3pt);

\draw[line width=0.5mm,dashed] (0.5,0.6) -- ($(p.corner 1)!(0.5,0.6)!(p.corner 6)$);
\draw[line width=0.5mm,dashed] (0.5,0.6) -- ($(p.corner 2)!(0.5,0.6)!(p.corner 3)$);
\draw[line width=0.5mm,dashed] (0.5,0.6) -- ($(p.corner 5)!(0.5,0.6)!(p.corner 4)$);
\end{tikzpicture}
}
\end{center}
\caption{A triangle probability simplex embedded into \(\mathbb{R}^2\) as a hexagon.}
\label{fig:hexagon}
\end{figure}
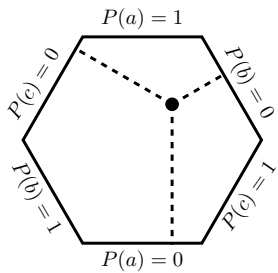

In this case, the triangular simplex has been truncated. For higher order simplices, this appears to become increasingly unwieldy unless some simplifying trick is developed. If it is successfully done, then a cleaner representation for members of \(\text{RPF}(K)\) becomes available.

\section{Conclusion}

Modeling probability as a ``relative'' measure takes on a new significance given the proliferation of computational methods. The nice properties of RPFs include formula simplification, closure under limits, and the ability to model non-standard distributions.

Should practitioners regularly use the language of relative probability when describing and deploying probabilistic models? It may not be beneficial in every situation, but they should certainly consider it! Computational techniques for Bayesian inference and machine learning in many cases remove the need to normalize the distributions involved, so why not make it official?

\begin{appendices}

\section{Topology and Limits in Relative Probability Space}
\label{appedix:topology}

In section \ref{section:topology}, we established a topology on \(\text{RPF}(K)\) by embedding it into Euclidean space, and demonstrated its compactness. Here is an alternative topology that defines a \textit{basis of open sets} directly. While this is not used in the context of this paper, it shows promise in providing a more natural framework for topological proofs and simplex partitioning.

The notion of an open set changes when a topological space is restricted to a lower dimension. For example, on the real number line \(\mathbb{R}\), the open interval (0, 1) is an open set. However, once it is embedded into \(\mathbb{R}^2\), it is now a line segment in a plane and no longer open (see figure \ref{fig:sub_topology}). For example, the set \(\{(x, y): x \in (0, 1)\;  \text{and}\;  y \in (-\epsilon, +\epsilon)\}\) given an \(\epsilon > 0\) is such an open set on \(\mathbb{R}^2\). The line segment (0, 1) is open in \(\mathbb{R}\) because it is the restriction of this open set on \(\mathbb{R}^2\). 

\begin{figure}[h]
\centering
\resizebox{0.4\textwidth}{!}{
\begin{tikzpicture} %episilon diagram
\draw[<->] (-0.7,0)--(5,0) node[right] {\(x\)};
\draw[<->] (0,-1,0)--(0,1) node[above] {\(y\)};
\draw[densely dashed,red,thick] (0,-0.5) rectangle (4.5,0.5);
\draw[draw=blue,thick] (0,0) circle (2pt) node[above left=-0.1,font=\ttfamily\tiny] {0};
\draw[draw=blue,thick] (4.5,0) circle (2pt) node [above right=-0.1,font=\ttfamily\tiny] {1};

\node[left=-0.05,font=\ttfamily\footnotesize] at (0,-0.5) {\(-\varepsilon\)};
\node[left=-0.05,font=\ttfamily\footnotesize] at (0,0.5) {\(\varepsilon\)};
\node[circle,inner sep=1pt] at (0,0) (0){};
\node[circle,inner sep=1pt] at (4.5,0) (1){};
\path[draw,blue,thick] (0)--(1);
\end{tikzpicture}
}
\caption{The small box that is the interior of the dotted rectangle is an open set in \(\mathbb{R}^2\), and therefore its restriction to \(\mathbb{R}\) - the line segment - is an open set in \(\mathbb{R}\). But the line segment is not open in \(\mathbb{R}^2\).}
\label{fig:sub_topology}
\end{figure}
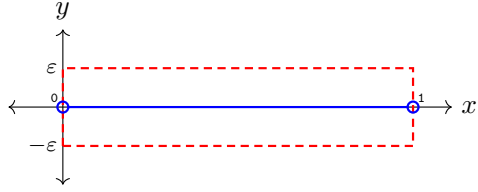

Likewise, an open set on \(\text{RPF}(K)\) will not be an open set on \(\text{RPF}(K+1)\). Starting with \(\text{RPF}(2)\), we find a totally comparable RPF that corresponds 1:1 with the magnitude space.

\begin{theorem}
Let \(\Omega = \{0, 1\}\) have two elements, with relative probability function \(P\). Then, \(P\) is completely determined by \(P(0, 1)\).
\end{theorem}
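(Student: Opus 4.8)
The plan is to argue directly from the fact that $\Omega \times \Omega$ has only four elements when $|\Omega| = 2$, and to show that three of the four function values are forced by the fundamental axioms once $P(0,1)$ is fixed. First I would observe that the domain of $P$ is $\Omega \times \Omega = \{(0,0),(0,1),(1,0),(1,1)\}$, so determining $P$ amounts to pinning down exactly these four magnitude-wildcard values.

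Next I would dispatch the two diagonal entries using the identity axiom of Definition \ref{def:fundamental_laws}: $P(0,0) = 1$ and $P(1,1) = 1$, independent of any choice. This leaves the two off-diagonal entries $P(0,1)$ and $P(1,0)$. Treating $P(0,1)$ as the given datum, I would invoke the inverse axiom to write $P(1,0) = P(0,1)^{-1}$, so that $P(1,0)$ is a function of $P(0,1)$ alone. At this point all four values have been expressed in terms of $P(0,1)$, establishing that $P$ is completely determined by that single value.

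The one point requiring care, and the only real obstacle, is verifying that the inverse $P(0,1)^{-1}$ is well defined for every admissible value of $P(0,1)$, including the boundary cases. Here I would appeal to the conventions already fixed in the Preliminaries: $0^{-1} = \infty$ and $\infty^{-1} = 0$ on $\mathbb{M}$, together with the understanding that the wildcard satisfies $\ast^{-1} = \ast$, since $\ast$ encodes a loss of information that is preserved under reversal (this is exactly the content motivating the inverse axiom in the discussion following the Path Composition proposition). Thus $P(1,0)$ is unambiguously specified in all of the relevant regimes $P(0,1) \in (0,\infty)$, $P(0,1) = 0$, $P(0,1) = \infty$, and $P(0,1) = \ast$.

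Finally, although the composition axiom plays no role in the determination claim itself, I would note in passing that the resulting assignment is automatically consistent with composition for any choice of $P(0,1)$: the nontrivial instance reduces to checking $1 :\cong P(0,1) \cdot P(0,1)^{-1}$, which holds because the right-hand side equals $1$ on $(0,\infty)$ and equals $\ast$ (which matches everything, by Lemma \ref{wild_prop_2}) at the boundary values. This confirms that the single parameter $P(0,1)$ ranges freely over $\mathbb{M}^{\ast}$ and recovers the claimed bijection between $\text{RPF}^{\ast}(2)$ and the magnitude-wildcard space.
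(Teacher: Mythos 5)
Your proof is correct and takes essentially the same approach as the paper's, which simply sets \(q = P(0,1)\) and invokes the inverse axiom to get \(P(1,0) = q^{-1}\) (with the identity axiom implicitly fixing the diagonal values). Your extra care with the boundary and wildcard cases, and the closing consistency check against the composition axiom, elaborate on but do not depart from that argument.
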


\begin{proof}
Let \(q = P(0, 1)\). By the inverse symmetric property, \(P(1, 0) = q^{-1}\). These values completely determine \(P\) on the outcome level.
\end{proof}

This gives us both a topology and a compactness proof for \(\text{RPF}(2)\) because it is isomorphic to \(\mathbb{M}\). Its basis for open sets are the open intervals of \(\mathbb{M}\), including those intervals that include 0 and \(\infty\). For \(K > 2\), we will need more powerful tools, namely \textit{open patches}. The set of open patches, which come in several flavors, will be a basis for the open sets on \(\text{RPF}(K)\).

\begin{definition}
An \textit{interior open patch} of \(\text{RPF}(K)\) is one of the following:

\begin{enumerate}
  \item If \(K = 2\), a subset parameterized by an interior open interval of magnitudes. \(\{P | a < P(h_1, h_2) < b\}\) for some \(a, b \in \mathbb{M}\) 
  \item If \(K > 2\), a composition of interior patches with composing function \(P_{\top}\) also being an interior patch.
\end{enumerate}
\end{definition}

Interior open patches contain only totally mutually possible functions as illustrated in figure \ref{fig:interior_open_patch}.

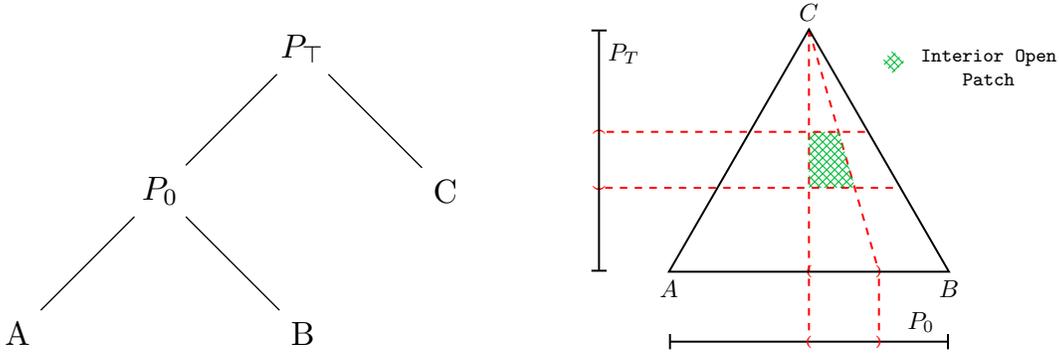
\begin{figure}[h]
\centering
\resizebox{0.4\textwidth}{!}{
\begin{tikzpicture}
\node {\(P_\top\)} [sibling distance = 3cm]
  child {node {\(P_0\)}  [sibling distance = 3cm]
    child {node {A}}
    child {node {B}}
  }
  child {node  {C}};
\end{tikzpicture}
}
 \hspace{3em}
\resizebox{0.4\textwidth}{!}{
\begin{tikzpicture} %Triangle 3
\draw[thick, dashed,red] (0,{2*sqrt(3)})--(1,0) node [font=\tiny\bf] {)};
\draw[thick, dashed,red] (0,{2*sqrt(3)})--(0,0) node [font=\tiny\bf] {(};
\path[draw=none,pattern=crosshatch, pattern color=darkgreen] (0,1.2)--(0.66,1.2)--(0.43,2)--(0,2)--cycle;

\node[rectangle,rotate=45,draw=none,pattern=crosshatch, pattern color=darkgreen] at (1.2,3) (P){};

\node(Q)[rectangle,draw=none,align=center,below right=-0.3 and 0.1 of P,font=\ttfamily\footnotesize] {Interior Open\\ Patch};

\draw[thick] (-2,0)--(2,0)--(0,{2*sqrt(3)})--cycle;

\node (A) at (-2,0) [below] {\(A\)}; 
\node (B) at (2,0) [below] {\(B\)}; 
\node (C) at (0,{2*sqrt(3)}) [above] {\(C\)}; 

\draw[thick, dashed,red] (0.8,2)--(-3,2) node[font=\tiny\bf,rotate=90] {)};
\draw[thick, dashed,red] (1.2,1.2)--(-3,1.2) node[font=\tiny\bf,rotate=90] {(};

\draw[thick, dashed,red] (1,-0.1)--(1,-0.9);
\draw[thick, dashed,red] (0,-0.1)--(0,-0.9);

\draw[thick,|-|] (-3,0)--(-3,{2*sqrt(3)}) node[right,pos=0.9] {\(P_{T}\)};

\draw[thick,|-|] (-2,-1)--(2,-1) node[above,pos=0.9] {\(P_{0}\)} node[red,font=\tiny\bf] at (1,-1) {)}
node[red,font=\tiny\bf] at (0,-1) {(};
\end{tikzpicture}
}
\caption{An interior open patch captures a contiguous set inside the probability simplex. Above is an example of a composite probability distribution where the diagram on the left shows how \(P_\top\) and \(P_0\) are composed, and the right shows how the interior segments on both interact to form a patch. }
\label{fig:interior_open_patch}
\end{figure}

\begin{definition}
A \textit{facet\footnote{A facet of a simplex is a subset where one parameter is equal to zero - equivalent to a face on a 3D object.} patch} of \(\text{RPF}(K)\) is one of the following:

\begin{enumerate}
  \item If \(K = 2\), an interval of the form \(\{P | 0 < P(h_1, h_2) < a\}\) for some \(a \in \mathbb{M}\) 
  \item If \(K > 2\), a composition where \(P_{\top}\) is drawn from an interior open patch, and all but one of the components are drawn from interior open patches. The final component - the \textit{facet component} - is itself drawn from a facet patch.
\end{enumerate}
\end{definition}

\begin{definition}
An \textit{exterior open patch} is a one of the following:

\begin{enumerate}
  \item A facet patch.
  \item A composition where \(P_{\top}\) is a facet patch. The \textit{facet component} is itself drawn from any open patch, and all the other components are drawn from interior open patches.
\end{enumerate}
\end{definition}

As seen in figure \ref{fig:exterior_open_patch}, exterior open patches touch the hyperfaces (facets) of the simplex as well as the vertices and edges.  As the number of dimensions increases and the composition diagram changes, more permutations are possible.

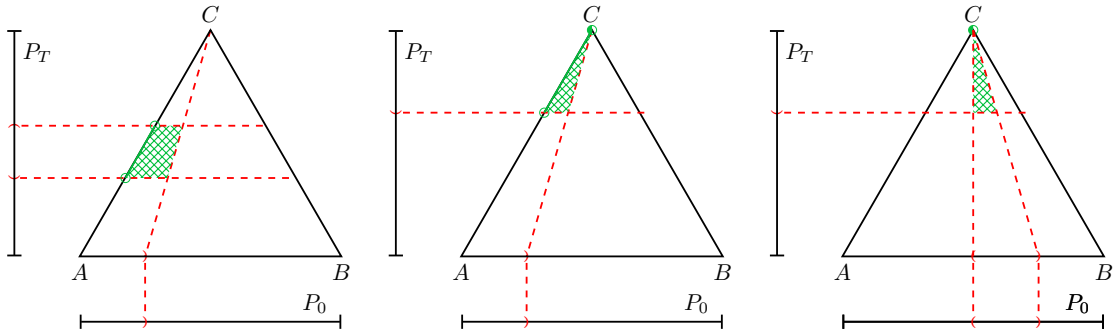
\begin{figure}[h]
\centering
\resizebox{0.3\textwidth}{!}{
\begin{tikzpicture} %Triangle 2
\draw[thick, dashed,red] (0,{2*sqrt(3)})--(-1,0) node [font=\tiny\bf] {)};
\path[draw=none,pattern=crosshatch, pattern color=darkgreen] (-1.3,1.2)--(-0.66,1.2)--(-0.43,2)--(-0.86,2)--cycle;

\draw[thick] (-2,0)--(2,0)--(0,{2*sqrt(3)})--cycle;

\node (A) at (-2,0) [below] {\(A\)}; 
\node (B) at (2,0) [below] {\(B\)}; 
\node (C) at (0,{2*sqrt(3)}) [above] {\(C\)}; 

\draw[thick, dashed,red] (0.8,2)--(-3,2) node[font=\tiny\bf,rotate=90] {)};
\draw[thick, dashed,red] (1.2,1.2)--(-3,1.2) node[font=\tiny\bf,rotate=90] {(};

\draw[thick, dashed,red] (-1,-0.1)--(-1,-0.9);
\draw[thick, darkgreen] (-1.3,1.2)--(-0.85,2);

\draw[draw=darkgreen] (-1.3,1.2) circle (2pt);
\draw[draw=darkgreen] (-0.85,2) circle (2pt);

\draw[thick,|-|] (-3,0)--(-3,{2*sqrt(3)}) node[right,pos=0.9] {\(P_{T}\)};

\draw[thick,|-|] (-2,-1)--(2,-1) node[above,pos=0.9] {\(P_{0}\)} node[red,font=\tiny\bf] at (-1,-1) {)};
\end{tikzpicture}
}
\resizebox{0.3\textwidth}{!}{
\begin{tikzpicture} %Triangle 1
\draw[thick, dashed,red] (0,{2*sqrt(3)})--(-1,0) node [font=\tiny\bf] {)};
\path[draw=none,pattern=crosshatch, pattern color=darkgreen] (-0.36,2.2)--(0,{2*sqrt(3)})--(-0.75,2.2)--cycle;

\draw[thick] (-2,0)--(2,0)--(0,{2*sqrt(3)})--cycle;

\node (A) at (-2,0) [below] {\(A\)}; 
\node (B) at (2,0) [below] {\(B\)}; 
\node (C) at (0,{2*sqrt(3)}) [above] {\(C\)}; 
\draw[thick, dashed,red] (0.8,2.2)--(-3,2.2) node[font=\tiny\bf,rotate=90] {(};
\draw[thick, dashed,red] (-1,-0.1)--(-1,-0.9);
\draw[thick, darkgreen] (-0.73,2.2)--(0,{2*sqrt(3)});

\draw[draw=darkgreen] (-0.73,2.2) circle (2pt);

\draw[draw=darkgreen] (0,{2*sqrt(3)}) circle (2pt);

\fill[darkgreen] (0,{2*sqrt(3)}) + (0, 2pt) arc (90:270:2pt);

\draw[thick,|-|] (-3,0)--(-3,{2*sqrt(3)}) node[right,pos=0.9] {\(P_{T}\)};

\draw[thick,|-|] (-2,-1)--(2,-1) node[above,pos=0.9] {\(P_{0}\)} node[red,font=\tiny\bf] at (-1,-1) {)};
\end{tikzpicture}
}
\resizebox{0.3\textwidth}{!}{
\begin{tikzpicture} %Triangle 1
\path[draw=none,pattern=crosshatch, pattern color=darkgreen] (0.36,2.2)--(0,{2*sqrt(3)})--(0,2.2)--cycle;

\draw[thick] (-2,0)--(2,0)--(0,{2*sqrt(3)})--cycle;

\node (A) at (-2,0) [below] {\(A\)}; 
\node (B) at (2,0) [below] {\(B\)}; 
\node (C) at (0,{2*sqrt(3)}) [above] {\(C\)}; 
\draw[thick, dashed,red] (0.8,2.2)--(-3,2.2) node[font=\tiny\bf,rotate=90] {(};

\draw[draw=darkgreen] (0,{2*sqrt(3)}) circle (2pt);

\fill[darkgreen] (0,{2*sqrt(3)}) + (0, 2pt) arc (90:270:2pt);

\draw[thick, dashed,red] (0,{2*sqrt(3)})--(1,0) node [font=\tiny\bf] {)};
\draw[thick, dashed,red] (0,{2*sqrt(3)})--(0,0) node [font=\tiny\bf] {(};

\draw[thick,|-|] (-3,0)--(-3,{2*sqrt(3)}) node[right,pos=0.9] {\(P_{T}\)};
\draw[thick,|-|] (-2,-1)--(2,-1) node[above,pos=0.9] {\(P_{0}\)};

\draw[thick, dashed,red] (1,-0.1)--(1,-0.9);
\draw[thick, dashed,red] (0,-0.1)--(0,-0.9);

\draw[thick,|-|] (-2,-1)--(2,-1) node[above,pos=0.9] {\(P_{0}\)}
node[red,font=\tiny\bf] at (1,-1) {)}
node[red,font=\tiny\bf] at (0,-1) {(};
\end{tikzpicture}
}
\caption{Exterior open patches. On the left is the facet patch, because it only touches a side (facet) of the simplex and not a corner. In the center is an exterior open patch where the facet component \(P_0\) is itself a facet patch (touching an edge and a corner), and on the right is an exterior open patch that touches a corner only because \(P_0\) is an interior open patch. Note that the point containing the corner at \(C\) in the middle and right diagram is only half filled because the patch contains some values where \(P(C) = 1\) and not others, depending on the relative probability between \(A\) and \(B\).}
\label{fig:exterior_open_patch}
\end{figure}

\begin{definition}
An \textit{open patch} is a subset of \(\text{RPF}(K)\) that is either an interior or exterior open patch.
\end{definition}

Now let the open patches be a basis for an open set thus defining a topology on \(\text{RPF}(K)\).

\begin{definition}
An \textit{open set} of \(\text{RPF}(K)\) is any (potentially infinite) union of open patches on \(\text{RPF}(K)\), or any finite intersection of open patches on \(\text{RPF}(K)\).
\end{definition}

These open patches are useful building blocks to building recursive proofs of compactness and other topological properties we might want to prove in the future.

\end{appendices}

%\subsubsection*{References}

\subsection*{Funding and Interests}

Local Maximum Labs is an ongoing effort to create and disseminate knowledge on intelligent computing, and its current resource is only the author's time. No funds, grants, or other support was received to assist with the preparation of this manuscript. The author certifies that they have no affiliations with or involvement in any external organization or entity with any financial interest or non-financial interest in the subject matter or materials discussed in this manuscript.

\end{document}